\def\eqref#1{equation~\ref{#1}}
\def\1{\bm{1}}
\def\vc{{\textbf{c}}}
\def\vw{{\textbf{w}}}
\def\mA{{\textbf{A}}}
\def\mC{{\textbf{C}}}
\def\mD{{\textbf{D}}}
\def\mE{{\textbf{E}}}
\def\mF{{\textbf{F}}}
\def\mG{{\textbf{G}}}
\def\mH{{\textbf{H}}}
\def\mI{{\textbf{I}}}
\def\mP{{\textbf{P}}}
\def\mW{{\textbf{W}}}
\DeclareMathAlphabet{\mathsfit}{\encodingdefault}{\sfdefault}{m}{sl}
\SetMathAlphabet{\mathsfit}{bold}{\encodingdefault}{\sfdefault}{bx}{n}
\def\gF{{\mathcal{F}}}
\def\gG{{\mathcal{G}}}
\def\gS{{\mathcal{S}}}
\def\gT{{\mathcal{T}}}
\def\sD{{\mathbb{D}}}
\def\sF{{\mathbb{F}}}
\def\sE{{\mathbb{E}}}
\def\sR{{\mathbb{R}}}
\def\sV{{\mathbb{V}}}
\definecolor{cycle2}{RGB}{106, 191, 0}
\definecolor{cycle3}{RGB}{191, 0, 0}
\pgfplotsset{compat=1.17}
\newcommand{\cmark}{\textcolor{cycle2}{\ding{52}}}
\newcommand{\xmark}{\textcolor{cycle3}{\ding{56}}}
\newcommand{\specialcell}[2][c]{  \begin{tabular}[#1]{@{}c@{}}#2\end{tabular}}
\newcommand{\keypoint}[1]{\noindent\textbf{#1}\ }
\newcommand{\usecond}[1]{\text{#1}}
\begin{document}

\title{Multi-Scale Attributed Node Embedding}

\shorttitle{Multi-Scale Attributed Node Embedding} 
\shortauthorlist{Rozemberczki et al.} 

\author{
\name{Benedek Rozemberczki$^*$}
\address{School of Informatics, The University of Edinburgh, 10 Crichton St, Newington, Edinburgh EH8 9AB, United Kingdom\email{$^*$benedek.rozemberczki@ed.ac.uk}}
\name{Carl Allen}
\address{School of Informatics, The University of Edinburgh, 10 Crichton St, Newington, Edinburgh EH8 9AB, United Kingdom}
\and
\name{Rik Sarkar}
\address{School of Informatics, The University of Edinburgh, 10 Crichton St, Newington, Edinburgh EH8 9AB, United Kingdom}}

\maketitle

\begin{abstract}
{We present network embedding algorithms that capture information about a node from the local distribution over node attributes around it, as observed over random walks following an approach similar to Skip-gram. Observations from neighborhoods of different sizes are either pooled (AE) or encoded distinctly in a multi-scale approach (MUSAE). Capturing attribute-neighborhood relationships over multiple scales is useful for a range of applications, including latent feature identification across disconnected networks with similar features. We prove theoretically that matrices of node-feature pointwise mutual information are implicitly factorized by the embeddings. Experiments show that our algorithms are computationally efficient and outperform comparable models on social networks and web graphs.}
{node embedding, node classification, attributed network, dimensionality reduction.}

\end{abstract}

\section{Introduction}

Node embedding is a fundamental technique in network analysis that serves as a precursor to numerous downstream machine learning and optimisation tasks, e.g. community detection, network visualization and link prediction~\cite{deepwalk,node2vec,tang2015line}. Several recent network embedding methods, such as \textit{Deepwalk}~\cite{deepwalk}, and \textit{Walklets}~\cite{perozzidontwalk}, achieve impressive performance by learning the network structure following an approach similar to {\em Word2Vec Skip-gram}~\cite{mikolov_2}, originally designed for word embedding. In these works, sequences of neighboring nodes are generated from random walks over a network, and representations are distilled from node-node proximity statistics that capture local neighbourhood information.

In real-world networks, nodes often have attributes (or features), e.g. in a social network nodes may represent people and node attributes may capture a person's interests, preferences or habits.
Attributes of a node and those of its local neighbourhood may contain information useful in downstream tasks. Such neighborhoods can be considered at different path lengths, or \textit{scales}, e.g. in a social network, near neighbors may be friends, whereas nodes separated by greater scales may have looser friends-of-friends associations.  Attributes of neighbors at different scales can be considered separately (\textit{multi-scale}) or \textit{pooled} in some way, e.g. weighted average. Node attributes can identify different network structure, e.g. nodes with similar attributes are often more likely to be connected (known as \textit{homophily}) such that patterns of similar node attributes may identify a community. Alternatively, similar attribute distributions over node neighbourhoods may identify similar network \textit{roles} even at distant locations in a network (Figure~\ref{fig:scaleness}), or in different networks. 

\textit{Attributed} network embedding methods~\cite{yang2015network,huang2017accelerated,liao2018attributed,feather} leverage attribute information to supplement local network structure, benefiting many applications, e.g. recommender systems, node classification and link prediction~\cite{yang2018binarized,yang2018enhanced,zhang2018sine}. Methods that consider a node's attributes generalize those that do not, for which a node's ``feature'' can be considered a standard basis vector (i.e. the node-feature matrix is the identity matrix).

\vspace{-10mm}
\begin{figure}[h!]
\hspace{-10mm}
\subfloat[Attributed example graph.\label{fig:scaleness}]{
\centering
\tikzset{VertexStyle/.style = {draw, very thick, circle}}
\begin{tikzpicture}[scale=0.63,transform shape]
\node[] at (-1,4.6) {{\large$\left \{ \clubsuit; \heartsuit; \diamondsuit  \right \}$}};
\node[] at (-3,-0.7) {{\large$\left \{ \heartsuit \right \}$}};
\node[] at (-1,-0.7) {{\large$\left \{ \clubsuit;\diamondsuit \right \}$}};
\node[] at (-3.6,2.6) {{\large$\left \{ \heartsuit;\spadesuit \right \}$}};

\node[] at (2,4.6) {{\large$\left \{ \clubsuit; \heartsuit; \diamondsuit; \spadesuit  \right \}$}};
\node[] at (1.6,2.6) {{\large$\left \{\diamondsuit; \spadesuit  \right \}$}};
\node[] at (4.8,2.0) {{\large$\left \{ \spadesuit  \right \}$}};
\node[] at (2,-0.7) {{\large$\left \{ \clubsuit \right \}$}};
\node[] at (6,4.6) {{\large$\left \{ \heartsuit;\spadesuit \right \}$}};
\node[] at (6,-0.7) {{\large$\left \{  \clubsuit;\diamondsuit  \right \}$}};
\node[] at (8,2.6) {{\large$\left \{ \heartsuit \right \}$}};
  \Vertex[L=$\textbf{A}$,x=-1,y=0]{1}
  \Vertex[L=$\textbf{B}$,x=1,y=2]{2}
  \Vertex[L=$\textbf{C}$,x=-1,y=4]{3}
  \Vertex[L=$\textbf{D}$,x=-3,y=0]{4}
  \Vertex[L=$\textbf{E}$,x=-3,y=2]{5}
  \Vertex[L=$\textbf{F}$,x=4,y=2]{6}
  \Vertex[L=$\textbf{G}$,x=8,y=2]{9} 
  \Vertex[L=$\textbf{H}$,x=6,y=0]{8}
    \Vertex[L=$\textbf{I\,}$,x=6,y=4]{10}
    \Vertex[L=$\textbf{J}$,x=2,y=0]{7} 
    \Vertex[L=$\textbf{K}$,x=2,y=4]{11}       
  \tikzstyle{LabelStyle}=[fill=white,sloped]
\tikzset{EdgeStyle/.style = {draw, very thick, circle, fill=yellow!40}}
  \Edge(1)(2)
  \Edge(2)(3)
  \Edge(1)(4)
  \Edge(1)(5)
  \Edge(4)(5)
  \Edge(2)(5)
  \Edge(3)(5) 
  
   \Edge(2)(6)
   \Edge(7)(6)
   \Edge(11)(6)
   \Edge(10)(6)
   \Edge(10)(9)
\Edge(8)(9)
\Edge(6)(8)
\Edge(10)(8)
\end{tikzpicture}}
\subfloat[Densification of the target matrix.\label{fig:powers}]{
  \centering

	\begin{tikzpicture}[scale=0.67,transform shape]
	
	\begin{axis}[
	width=0.65\textwidth,
	height=0.45\textwidth,	
	grid=major,
	grid style={dashed, gray!40},
	scaled ticks=false,
 	inner axis line style={-stealth},
    legend columns=1,	
	ytick={0.0,0.2,0.4,0.6,0.8,1.0},
	xtick={1,2,3,4,5,6,7,8},
	xlabel=$r$,
	ylabel=Ratio of non-zero elements in $\textbf{A}^r \textbf{X}$,
	enlargelimits=0.1,
	legend style={at={(0.95,0.35)},anchor=east}
	]
	
	\addplot[mark=triangle*,opacity=0.8,mark options={black,fill=yellow},mark size=5pt]
	coordinates {
(1,0.017)
(2,0.068)
(3,0.206)
(4,0.469)
(5,0.734)
(6,0.898)
(7,0.969)
(8,0.992)

	};\addlegendentry{Facebook}%
	\addplot[mark=diamond*,opacity=0.8,mark options={black,fill=blue},mark size=5pt]
	coordinates {
(1,0.023)
(2,0.467)
(3,0.894)
(4,0.983)
(5,0.997)
(6,0.999)
(7,1.0)
(8,1.0)

	};\addlegendentry{GitWebML}%
	\addplot[mark=*,opacity=0.8,mark options={black,fill=red},mark size=3pt]
	coordinates {
(1,0.073)
(2,0.479)
(3,0.877)
(4,0.986)
(5,0.998)
(6,1.0)
(7,1.0)
(8,1.0)

	};\addlegendentry{Twitch ES}%
	
	\addplot[mark=square*,opacity=0.8,mark options={black,fill=green},mark size=3pt]
	coordinates {
(1,0.067)
(2,0.449)
(3,0.838)
(4,0.977)
(5,0.998)
(6,1.0)
(7,1.0)
(8,1.0)
		
	};\addlegendentry{Twitch RU}%
	\addplot[mark=triangle*,opacity=0.8,mark options={black,fill=orange},mark size=5pt]
	coordinates {
(1,0.092)
(2,0.483)
(3,0.746)
(4,0.919)
(5,0.973)
(6,0.993)
(7,0.998)
(8,1.0)

	};\addlegendentry{Chameleons}%
	\addplot[mark=diamond*,opacity=0.8,mark options={black,fill=purple},mark size=5pt]
	coordinates {
(1,0.132)
(2,0.517)
(3,0.812)
(4,0.95)
(5,0.99)
(6,0.998)
(7,1.0)
(8,1.0)};\addlegendentry{Squirrels}

\end{axis}
\end{tikzpicture}}

\caption{Phenomena affecting and inspiring the design of the multi-scale attributed network embedding procedure. In \textbf{Figure \ref{fig:scaleness}} attributed nodes D and G have the same feature set and their nearest neighbours also exhibit equivalent sets of features, whereas features at higher order neighbourhoods differ.  A multi-scale attributed node embedding method is able to represent these differences and similarities in the embedding space. 
\textbf{Figure \ref{fig:powers}} shows that as the order of neighbourhoods considered (\textit{r}) increases, the product of the adjacency matrix  power and the feature matrix becomes less sparse. This suggests that an implicit decomposition method would be computationally beneficial for learning an embedding.}

\end{figure}
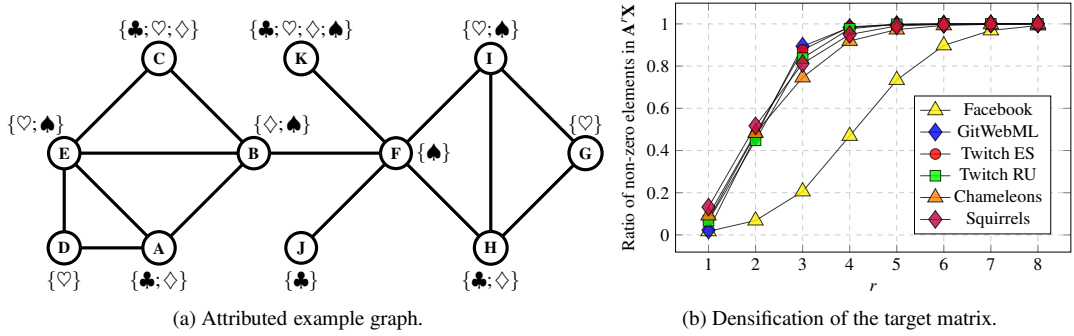

Many embedding methods correspond to matrix factorization, indeed some attributed embedding methods ~\cite{yang2018binarized} explicitly factorize a matrix of node-attribute information. 
Word embeddings learned using Skip-gram are known to implicitly factorize a matrix of \textit{pointwise mutual information} (PMI) of word co-occurrence statistics~\cite{levy2014neural}. Related network embedding methods~\cite{deepwalk,node2vec,tang2015line,qiu2018network} thus also factorize PMI matrices that relate to the probability of encountering other nodes on a random walk~\cite{qiu2018network}. Our key contributions are:

\begin{enumerate}
    \item We introduce Skip-gram style embedding algorithms that consider attribute distributions over local neighborhoods, both pooled (\textit{AE}) and multi-scale (\textit{MUSAE}), and their counterparts with distinct proximal features attributed to nodes (\textit{AE-EGO} and \textit{MUSAE-EGO}).
    
    \item We derive that the PMI matrices factorised by all embeddings in terms of adjacency and node-feature matrix and show that popular network embedding methods \textit{DeepWalk} \cite{deepwalk} and \textit{Walklets} \cite{perozzidontwalk} are special cases of \textit{AE} and \textit{MUSAE}.
	\item We show empirically that on real-world networks our algorithms outperform comparable methods at predicting node attributes,  computationally scalable and enable \textit{transfer learning}. 
    \item We provide reference implementations of \textit{AE} and \textit{MUSAE}, together with the datasets used for evaluation at \textit{https://github.com/benedekrozemberczki/MUSAE}. The proposed embedding procedures are also available in the open source \textit{Karate Club} machine learning library \cite{karateclub}.
\end{enumerate}

\begin{table}[!h]

\setlength{\tabcolsep}{3pt}
\centering
\caption{A summary of existing node embedding techniques (proximity preserving and attributed) with respect to having (\cmark) and missing (\xmark) desired properties. The time and space complexities are reported as a function of vertex and edge counts ($|\sV|$ and $|\sE|$), unique feature count $|\sF|$, average node feature count $m$, and embedding dimensions $d$.}

\label{fig:musae_comparison}
{\footnotesize
\begin{tabular}{l ccccccccccc}
         & \specialcell{\textbf{Generic}\\\textbf{Features}} &  \specialcell{\textbf{Multi}\\\textbf{Scale}} & \textbf{Implicit} & \textbf{Proximal} & \specialcell{\textbf{Higher}\\ \textbf{Order}} & \textbf{Inductive} & \textbf{Non-linear} &\specialcell{\textbf{Space}\\\textbf{Complexity}} & \specialcell{\textbf{Time}\\\textbf{Complexity}} \\ \hline
\textbf{DeepWalk} \cite{deepwalk}&\xmark&\xmark&\cmark&\cmark&\cmark&\xmark&\cmark&$\mathcal{O}(|\sV| \ d)$  &  $\mathcal{O}(|\sV|\ d)  $\\ [0.35em]
\textbf{LINE}$_2$ \cite{tang2015line}&\xmark&\cmark&\cmark&\cmark&\cmark&\xmark&\cmark&$\mathcal{O}(|\sV| \ d)$  &  $\mathcal{O}(|\sV|\ d)  $\\ [0.35em]
\textbf{Node2Vec} \cite{node2vec}&\xmark&\xmark&\cmark&\cmark&\cmark&\xmark&\cmark&$\mathcal{O}(|\sV|^3)$    &  $\mathcal{O}(|\sV|\ d)  $\\ [0.35em]
\textbf{Walklets} \cite{perozzidontwalk}&\xmark&\cmark&\cmark&\cmark&\cmark&\xmark&\cmark&$\mathcal{O}(|\sV| \ d)$  &  $\mathcal{O}(|\sV| \ d)  $\\ [0.35em]
\textbf{NetMF} \cite{qiu2018network}&\xmark&\xmark&\xmark&\cmark&\cmark&\xmark&\xmark&$\mathcal{O}(|\sV|^2\ d)$  &  $\mathcal{O}(|\sV|^3 \ d)  $\\  [0.35em]
\textbf{HOPE} \cite{hope}&\xmark&\xmark&\xmark&\cmark&\cmark&\xmark&\xmark&$\mathcal{O}(|\sV|^2\ d)$  &  $\mathcal{O}(|\sV|^3 \ d)  $\\  [0.35em]
\textbf{GraRep} \cite{cao2015grarep}&\xmark&\cmark&\xmark&\cmark&\cmark&\xmark&\xmark&$\mathcal{O}(|\sV|^2 \ d)$  &  $\mathcal{O}(|\sV|^3 \ d)  $\\ [0.35em]
\hline
\textbf{TADW} \cite{yang2015network}&\cmark&\xmark&\xmark&\xmark&\cmark&\xmark&\xmark&$\mathcal{O}(|\sV| + |\sF|) \ d )$&$\mathcal{O}(|\sV|^2 \ |\sF| \ d )$\\[0.35em]
\textbf{ASNE} \cite{liao2018attributed}&\cmark&\xmark&\xmark&\cmark&\xmark&\xmark&\xmark&$\mathcal{O}(|\sV| + |\sF|) \ d )$&$\mathcal{O}(|\sE| \ m \ d )$\\[0.35em]
\textbf{AANE} \cite{huang2017accelerated}&\cmark&\xmark&\xmark&\cmark&\xmark&\xmark& \cmark&$\mathcal{O}(|\sV|^2 \ m \ d )$&$\mathcal{O}(|\sV|^2 \ m \ d )$\\[0.35em]
\textbf{BANE} \cite{yang2018binarized} &\cmark&\xmark&\xmark&\xmark&\cmark&\xmark&\xmark&$\mathcal{O}(|\sV|^2 \ m\ d )$&$\mathcal{O}(|\sV|^3 \ m\ d )$\\[0.35em]
\textbf{TENE} \cite{yang2018enhanced} &\cmark&\xmark&\xmark&\cmark&\xmark&\xmark&\xmark&$\mathcal{O}(|\sV| + |\sF|) \ d )$&$\mathcal{O}(|\sE| \ m \ d )$\\[0.35em]
\hline
\textbf{AE} &\cmark&\xmark&\cmark&\xmark&\cmark&\cmark&\cmark&$\mathcal{O}((|\sV| + |\sF|) \ d )$&$\mathcal{O}(|\sV| \ m \ d )$\\[0.35em]
\textbf{MUSAE} &\cmark&\cmark&\cmark&\xmark&\cmark&\cmark&\cmark&$\mathcal{O}((|\sV| + |\sF|) \ d )$&$\mathcal{O}(|\sV| \ m \ d )$\\[0.35em]
\textbf{AE-EGO} &\cmark&\xmark&\cmark&\cmark&\cmark&\xmark&\cmark&$\mathcal{O}((|\sV| + |\sF|) \ d )$&$\mathcal{O}(|\sV| \ m \ d )$\\[0.35em]
\textbf{MUSAE-EGO} &\cmark&\cmark&\cmark&\cmark&\cmark&\xmark&\cmark&$\mathcal{O}((|\sV| + |\sF|) \ d )$&$\mathcal{O}(|\sV| \ m \ d )$\\[0.35em]
\hline
\end{tabular}

}
\end{table}

\section{Related work}\label{sec:related_work}
Efficient unsupervised learning of node embeddings for large networks has recently seen unprecedented development. The current paradigm focuses on learning latent node representations such that those sharing neighbors \cite{deepwalk,node2vec,rozemberczki2018fastsequence,gemsec}, structural roles \cite{refex, rolx, struc2vec, ahmed2018learning} or attributes \cite{yang2015network, yang2018binarized,feather,zhang2018sine} are close in the latent space. Our work falls under the first and last of these categories.
\subsection{A general overview} Several recent \textit{proximity-preserving} node embedding algorithms are inspired by the \textit{Skip-gram} model \cite{mikolov_1,mikolov_2}, which generates word embeddings by implicitly factorizing a matrix of shifted pointwise mutual information (PMI) of word co-occurrence statistics extracted from a text corpus \cite{levy2014neural}. 
For example, \textit{DeepWalk} \cite{deepwalk} generates a ``corpus'' of truncated random walks over a graph from which the \textit{Skip-gram} model generates proximity-preserving node embeddings. 
In doing so, \textit{DeepWalk} implicitly factorizes a PMI matrix that has been shown to correspond to the mean of a set of normalized adjacency matrix powers (up to a given order) reflecting different path lengths of a first-order Markov process \cite{qiu2018network}. 
Such averaging, or \textit{pooling} treats neighbors at different path lengths (or \textit{scales}) equally or according to fixed weightings \cite{mikolov_1, node2vec}; whereas it has been found that an optimal weighting may be task or dataset specific \cite{abu2018watch}. 
In contrast, \textit{multi-scale} node embedding methods, such as \textit{LINE} \cite{tang2015line}, \textit{GraRep} \cite{cao2015grarep} and \textit{Walklets} \cite{perozzidontwalk}, learn separate lower-dimensional embedding \textit{components} for each path length and concatenate them to form the full node representation. Such un-pooled representations, comprising distinct but less information at each scale, are found to give higher performance in a number of downstream settings, without increasing the overall complexity or number of free parameters \cite{tang2015line,cao2015grarep, perozzidontwalk}.

\textit{Attributed} node embedding methods refine ideas from proximity-preserving  node embeddings to also incorporate node \textit{attributes} (equivalently, features or labels) \cite{yang2015network, liao2018attributed, huang2017accelerated, yang2018binarized, yang2018enhanced}. Similarities between both a node's neighborhood structure and features contribute to determining pairwise proximity in the latent space, although models follow quite different strategies to learn such representations. The (arguably) simplest model, \textit{TADW} \cite{yang2015network}, decomposes a convex combination of normalized adjacency matrix powers into a matrix product that includes the feature matrix. Several other models, such as \textit{SINE} \cite{zhang2018sine} and \textit{ASNE} \cite{liao2018attributed}, implicitly factorize a matrix formed by concatenating the feature and adjacency matrices. Other approaches such as \textit{TENE} \cite{yang2018enhanced}, formulate the attributed node embedding task as a joint non-negative matrix factorization problem in which node representations obtained from sub-tasks are used to regularize one another. \textit{AANE} \cite{huang2017accelerated} uses a similar network structure based regularization approach, in which a node feature similarity matrix is decomposed using the alternating direction method of multipliers. \textit{BANE} \cite{yang2018binarized}, the method most similar to our own, learns attributed node embeddings that explicitly factorize the product of a normalized adjacency matrix power and a feature matrix. Many other methods exist, but do not consider the attributes of higher order neighborhoods~\cite{liao2018attributed, huang2017accelerated}.
A key difference between our work and previous methods is that we jointly learn \textit{distinct} representations of nodes and features. 

The relationship between our pooled (\textit{AE}) and multi-scale (\textit{MUSAE}) attributed node embedding methods mirrors that between graph convolutional neural networks (GCNNs) and  multi-scale GCNNs. The former, e.g. \textit{GCN} \cite{kipf2017semi}, \textit{GraphSage} \cite{graphsage_nips17}, \textit{GAT} \cite{gat_iclr18}, \textit{APPNP} \cite{klicpera_predict_2019}, \textit{SGCONV} \cite{sgc_icml19} and \textit{ClusterGCN} \cite{ clustergcn_kdd19}, create latent node representations that pool node attributes from arbitrary order neighborhoods, which are then inseparable and unrecoverable. The latter, e.g. \textit{MixHop} \cite{mixhop_icml19} and \textit{SIGN} \cite{sign} 
learn latent features for each proximity.
\subsection{A desiderata based comparison}
A node representation learning technique must have certain desired properties in order to generate expressive vertex features and have scalability. We summarized these beneficial properties of node embedding techniques in Table \ref{fig:musae_comparison} with the respective space and time complexities. 

\begin{itemize}
    \item \textbf{Generic features:} Generic vertex properties such as the income of users in a social network are encoded when a node embedding is learned. These generic features are used to contextualize the location of the node in the embedding space universally.
    \item \textbf{Multi-scale:} Information obtained from distinct proximities (e.g. random walk hops, shortest path distance) is encoded by distinct groups of node embedding features. Using a multi-scale node representation the micro-, meso-, and macroscopic context of a node can be discerned.  
    \item \textbf{Implicit:} The decomposed target matrix is not calculated explicitly. This reduces the required space and time complexity, which makes the embedding model applicable in practical large-scale industrial settings.
    \item \textbf{Proximal:} The contextual proximity information (location in comparison to other vertices) about the node is encoded when the node embedding is created. A proximal node embedding cannot be inductive, as the proximal context is not be meaningful in disjoint graphs.
    \item \textbf{Higher-order:} The embedding encodes information from nodes that are not adjacent to a node. For example, in random walk based contextualization the information is obtained from multiple hops, not just the first step.
    \item \textbf{Inductive:} The node embedding technique can map unseen nodes to the embedding space which are not connected to the graph used for training. An embedding technique which contextualizes the nodes based on the proximity to other nodes cannot be inductive.
    \item \textbf{Non-linear:} A node-node proximity score in the target matrix is not a linear function of the two node embedding vectors. This property allows for super and sub linear proximity score encoding. 
\end{itemize}
\vspace{-5mm}

\begin{algorithm2e}[th!]
    
    \vspace{5pt}
    \DontPrintSemicolon
    \SetAlgoLined
    \footnotesize
    \KwData{$\gG =(\sV, \mathbb{E})$ -- Graph to be embedded.\\
    \quad \quad\,\,\,\,\,$\{\sF_v\}_{\sV}$ -- Set of node feature sets.\\
    \quad \quad\,\,\,\,\,$s$ -- Number of sequence samples.\\
    \quad \quad\,\,\,\,\,$l$ -- Length of sequences.\\
    \quad \quad\,\,\,\,\,$t$ -- Context size.\\
    \quad \quad\,\,\,\,\,$d$ -- Embedding dimension.\\
    \quad \quad\,\,\,\,\,$b$ -- Number of negative samples.\\
    }
    \vspace{5pt}
    \KwResult{Node embedding $g$ and feature embedding $h$.}
    \vspace{5pt}
    
    \For{i\,\,\text{\upshape in}\,\,$1:s$}{
    
        Pick $v_1\in \sV$ according to $P(v) \sim deg(v)/\text{vol}(\gG).$\;
        
    	$(v_1,v_2,\dots,v_l)\leftarrow$ Sample Nodes$(\gG,v_1, l)$\;
    	
        \For{$j \,\, \textup{in} \,\, 1:l-t$}{
    	        \For{$r \,\, \text{\upshape in} \,\, 1:t$}{
    	        \For{$f \,\, \text{\upshape in} \,\, \sF_{v_{j+r}}$}{
    	        	Add tuple $(v_j,f)$ to multiset $\sD$.\;
    	        	
    	        }
    	        \For{$f \,\, \text{\upshape in} \,\, \sF_{v_{j}}$}{
    	            Add tuple $(v_{j+r},f)$ to multiset $\sD$.\;
    	        }	    
    	      }
         }

    }
    
    \vspace{5pt}
    Run SGNS on $\sD$ with $b$ negative samples and $d$ dimensions.\;
    
    Output $g_v,\,\, \forall v \in \sV$, and $h_f,\,\, \forall f \in \gF=\cup_\sV \sF_v$.  
    \vspace{5pt}
    {\small\caption{AE sampling and training procedure}\label{AE_algo}}
    
\end{algorithm2e}
\vspace{-8mm}
\section{Attributed embedding algorithms}\label{sec:model}

Here we define our algorithms
to jointly learn embeddings of nodes and attributes based on the structure and attributes of local neighborhoods. 
The aim is to learn similar embeddings for nodes that occur in neighborhoods of similar attributes; and similar embeddings for attributes that occur in similar neighborhoods of nodes.

Let $\gG \!=\! (\sV,\mathbb{E})$ be an undirected graph, where $\sV$ and $\mathbb{E}$ are the sets of vertices and edges (or links), and let $\sF$ be the set of all binary node features. 
For each node $v \!\in\! \sV$, let $\sF_v \!\subseteq\! \sF$ be the subset of features belonging to $v$. An embedding of nodes is a mapping $g:\sV \!\to\! \sR^d$ that assigns a $d$-dimensional vector $g(v)$, or simply $g_v$, to each node $v$ and is fully described by a matrix $\mG \!\in\! \sR^{|\sV| \times d}$. An embedding of the features (to the same latent space) is a mapping $h: \sF\to \smash{\sR^{d}}$ with embeddings denoted $h(f) \!\doteq\! h_f$, as summarised by a matrix $\mH \!\in\! \sR^{|\sF| \times d}$. 
\subsection{Attributed embedding}

The \textit{Attributed Embedding} (\textit{AE}) method is described by Algorithm \ref{AE_algo} and the main idea is figuratively summarized in Figure \ref{fig:ae}. With probability proportional to node degree, $s$ starting nodes $v_1$ are sampled from $\sV$  (line 2). From each starting node, a node sequence of length $l$ is sampled over $\gG$ following a first-order random walk (line 3). For a given window size $t$, iterate over the first $l \!-\! t$ nodes of the sequence $v_j$, termed \textit{source} nodes (line 4). For each source node, the subsequent $t$ nodes are considered \textit{target} nodes  (line 5). For each target node $v_{j+r}$, the tuple $(v_j,f)$ is added to the corpus $\sD$ for each feature $f \!\in\! \sF_{v_{j+r}}$ (lines 6-7). Each tuple $(v_{j+r},f)$ for features of the source node $f \!\in\! \sF_{v_{j}}$ is also added to $\sD$ (lines 9-10). Running Skip-gram on $\sD$ with $b$ negative samples (line 15) generates the $d$-dimensional node and feature embeddings ($\mG$, $\mH$). 

\input{figures/musae_illustrative}
\subsection{Multi-scale attributed embedding}

The \textit{AE} algorithm pools features across neighborhoods of different proximity. Inspired by the performance of (unattributed) multi-scale node embeddings, we adapt \textit{AE} to learn \textit{multi-scale} attributed embeddings. The procedure is described by Algorithm \ref{MUSAE_algo} and the main idea is figuratively summarized in Figure \ref{fig:musae}.
The embedding component of a node $v\!\in\! \sV$ at proximity $r\!\in\!\{1,... ,t\}$ is given by a mapping $g^r:\sV\to \sR^{d/t}$ (where $t$ divides $d$). Similarly, the embedding component of feature $f \!\in\! \sF$ at proximity $r$ is given by a mapping $h^r:\sF\to \sR^{d/t}$. Concatenating the $t$ components gives a $d$-dimensional embedding for each node and feature.
The \textit{Multi-Scale Attributed Embedding} (\textit{MUSAE}) method is described by Algorithm \ref{MUSAE_algo}. 
Source and target node pairs are generated from sampled node sequences as for \textit{AE} (lines 2-5). 
Each feature of a target node $f \!\in\!  \sF_{v_{j+r}}$ is again considered, but tuples $(v_j,f)$ are added to a \textit{sub-corpus} $\sD_{\overset{r}{\rightarrow}}$ (lines 6-7) and for each source node feature $\smash{f \!\in\! \sF_{v_j}}$ tuples $(v_{j+r},f)$ are added to another sub-corpus $\sD_{\overset{r}{\leftarrow}}$  (lines 9-10). 
Running Skip-gram with $b$ negative samples on each sub-corpus \smash{$\sD_{r} \!=\! \sD_{\overset{r}{\rightarrow}} \cup \sD_{\overset{r}{\leftarrow}}$}  (line 17) generates the \smash{$\tfrac{d}{t}$}-dimensional components to concatenate. 

    \begin{algorithm2e}[h!]
    \DontPrintSemicolon
    \SetAlgoLined

    \setcounter{AlgoLine}{0}
    \vspace{5pt}
    \footnotesize
    \KwData{$\gG =(\sV,\mathbb{E})$ -- Graph to be embedded.\\
    \quad \quad\,\,\,\,\,$\{\sF_v\}_{\sV}$ -- Set of node feature sets.\\
    \quad \quad\,\,\,\,\,$s$ -- Number of sequence samples.\\
    \quad \quad\,\,\,\,\,$l$ -- Length of sequences.\\
    \quad \quad\,\,\,\,\,$t$ -- Context size.\\
    \quad \quad\,\,\,\,\,$d$ -- Embedding dimension.\\
    \quad \quad\,\,\,\,\,$b$ -- Number of negative samples.\\
    }
    \vspace{5pt}
    \KwResult{Node embedding components $g^r$ and feature embeddings component $h^r$, for $r \!\in\!\{1,...,t\}$.}
    \vspace{5pt}
    \For{i\,\,\textup{in}\,\,$1:s$}{
        Pick $v_1\in \sV$ according to $P(v) \sim deg(v)/\text{vol}(\gG) .$\;
        
        $(v_1,v_2,\dots,v_l)\leftarrow$ Sample Nodes$(\gG,v_1, l)$\;
        
        \For{$j \,\, \textup{in} \,\, 1:l-t$}{
    	        \For{$r \,\, \textup{in} \,\, 1:t$}{
    	        \For{$f \,\, \textup{in} \,\, \sF_{v_{j+r}}$}{
    	        	Add the tuple $(v_j,f)$ to multiset $\sD_{\overset{r}{\rightarrow}}$.\;
    	        }
    	        \For{$f \,\, \textup{in} \,\, \sF_{v_{j}}$}{
    	        Add the tuple $(v_{j+r},f)$ to multiset $\sD_{\overset{r}{\leftarrow}}$.\;
    	        }	    
    	      }
         }

    }
    \For{$r \,\, \textup{in} \,\, 1:t$}{
        Create $\sD_r$ by unification of \smash{$\sD_{\overset{r}{\rightarrow}}$ and $\sD_{\overset{r}{\leftarrow}}$}.\;

        Run SGNS on $\sD_r$ with $b$ negative samples and $\tfrac{d}{t}$ dimensions.\;

        Output $g^r_v,\,\, \forall v \in \sV$, and $h^r_f,\,\, \forall f \in \sF = \cup_\sV \sF_v$.  
    }
    {\small\caption{MUSAE sampling and training procedure}\label{MUSAE_algo}}
    \end{algorithm2e}

\section{Attributed embedding as implicit matrix factorization}\label{sec:theory}

The results of \cite{levy2014neural} showed that the loss function of Skip-gram with negative sampling (SGNS) is minimized if its two output embedding matrices $\mW, \mC$ factorize a matrix of pointwise mutual information (PMI) of word co-occurrence statistics. Specifically, for a corpus $\sD$ over a dictionary $\sV$ with $|\sV| \!=\! n$, SGNS (with $b$ negative samples)  generates embeddings $\vw_w$, $\vc_c\!\in\!\sR^d$ (columns of $\mW$, $\mC \!\in\! \sR^{d\times n}$) for each target and context word $w, c \!\in\! \sV$, satisfying:
$$
\vw_w^\top \vc_c^{\phantom\top} 
\approx
\log \big( \tfrac{\#(w,c)|\sD|}{\#(w)\#(c)}   \big) - \log b\ ,
$$
where $\#(w)$, $\#(c)$ and $\#(w,c)$ denote counts of $w$, $c$ and both words appearing within a sliding context window. 
Considering $\tfrac{\#(w)}{|\sD|}$,  $\tfrac{\#(c)}{|\sD|}$,  $\tfrac{\#(w,c)}{|\sD|}$ as empirical estimates of $p(w)$, $p(c)$ and $p(w,c)$ respectively gives the approximate low-rank factorization (since typically $d\!\ll\!n$):
$$
\mW^\top \mC 
\approx
\left[\,\textup{PMI}(w,c) - \log b \,\right]_{w,c\in\sV}\, ,
$$
The findings of \cite{qiu2018network} extended this result to node embedding models that apply SGNS to a ``corpus'' generated from random walks over the graph. In the case of \textit{DeepWalk} where random walks are first-order Markov, the joint probability distributions over nodes at different steps of a random walk can be expressed in closed form, and a closed form for the factorized PMI matrix follows. Here we derive the matrices implicitly factorized by \textit{AE} and \textit{MUSAE}. 

\keypoint{Notation:} For a graph $\gG \!=\! (\sV, \mathbb{E})$, $|\sV| \!=\! n$, let $\mA \!\in\!\sR^{n\times n}$ denote the adjacency matrix and $\mD \!\in\!\sR^{n\times n}$ the diagonal degree matrix, i.e.
$
\mD_{v,v} \!=\! \text{deg}(v) \!=\! \sum_w\mA_{v,w}$.
Let $c=\sum_{v,w}\mA_{v,w}$ denote the \textit{volume} of $\gG$. We define the binary attribute matrix $\mF\!\in\!\{0,1\}^{|\sV| \times | \sF|}$ by
$
\mF_{v,f}=\textbf{1}
_{f \in \sF_v},
\ \forall v \!\in\! \sV, f \!\in\!  \sF.
$
To ease notation, we set $\mP \!=\! \mD^{-1}\mA$ and $\mE \!=\! diag(\1^\top \mD\mF)$, where $diag$ indicates a diagonal matrix.

\keypoint{Interpretation:} Assuming $\gG$ is ergodic, $p(v) \!=\! \tfrac{deg(v)}{c}$ is the stationary distribution over nodes $v\!\in \!\sV$, i.e. $c^{-1}\mD=diag(p(v))$; and $c^{-1}\mA$ is the stationary joint distribution over consecutive nodes of a random walk $p(v_j,v_{j+1})$. 
$\mF_{v, f}$ can be considered a Bernoulli parameter of the probability $p(f|v)$ of observing feature $f$ at a node $v$, hence $c^{-1}\mD\mF$ describes the stationary joint distribution $p(f,v)$ over nodes and features.
Accordingly, $\mP$ is the transition matrix of conditional probabilities $p(v_{j+1}|v_j)$; and $\mE$ is a diagonal matrix proportional to the (binary) probability $p(f)$ of observing feature $f$ at the stationary distribution. We note that $p(f)$ need not sum to 1, whereas $p(v)$ necessarily must.


\subsection{Multi-scale case (MUSAE)}

We know that the SGNS aspect of \textit{MUSAE} learns embeddings $g^r_v$, $h^r_f$ that satisfy
$
{g^r_v}^\top h^r_f
 \!\approx\! 
\log\big(\tfrac{\#(v,f)_r|\sD_r|}{\#(v)_r\# (f)_r}\big)-\log b$,
$\forall v \!\in\! \sV, f \!\in\! \sF.
$
Our aim is to express this factorization in terms of known properties of the graph $\gG$ and its features.

\vspace{10pt}

\begin{restatable}{lemma}{lemmaplims}
The empirical statistics of node-feature pairs obtained from random walks give unbiased estimates of the joint probability of observing feature $f\!\in \!\sF$ $r$ steps \textbf{(i) after; or (ii) before} node $v\in \sV$, as given by:
\begin{align*}
    \underset{l\rightarrow \infty}{\text{plim}}
    \tfrac{\# (v,f)_{\underset{r}{\rightarrow}}}{|\sD_{\underset{r}{\rightarrow}}|}
    &= c^{-1}(\mD\mP^r\mF)_{v,f}
    \\
    \underset{l\rightarrow \infty}{\text{plim}}\tfrac{\# (v,f)_{\underset{r}{\leftarrow}}}{|\sD_{\underset{r}{\leftarrow}}|}
    &= c^{-1}(\mF^{\top}\mD\mP^r)_{f,v}
\end{align*}
\end{restatable}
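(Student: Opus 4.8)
The plan is to read each count as an ergodic time-average along a single random walk and to apply a law of large numbers for the stationary chain. Because the starting node $v_1$ is drawn from $P(v)=\deg(v)/c=p(v)$, which is precisely the stationary distribution of $\mP=\mD^{-1}\mA$, the whole sequence $(v_1,\dots,v_l)$ is a stationary ergodic Markov chain; in particular every source node $v_j$ carries the marginal law $p(v)$ and each consecutive pair carries the stationary joint law $c^{-1}\mA$.

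For part (i) I would write the count as a sum of indicators over the sampled source positions,
$$\#(v,f)_{\underset{r}{\rightarrow}}=\sum_{j=1}^{l-t}\mathbbm{1}_{\{v_j=v\}}\,\mathbbm{1}_{\{f\in\sF_{v_{j+r}}\}}\,,$$
so that each summand is a functional of the length-$(r{+}1)$ window $(v_j,\dots,v_{j+r})$. The heart of the argument is the stationary expectation of one summand: conditioning on $v_j=v$, the node $r$ steps ahead has law $\mP^r_{v,\cdot}$, while $\mF_{w,f}$ acts as the Bernoulli parameter $p(f\mid w)$, giving
$$\E\big[\mathbbm{1}_{\{v_j=v\}}\mathbbm{1}_{\{f\in\sF_{v_{j+r}}\}}\big]=p(v)\,(\mP^r\mF)_{v,f}=c^{-1}(\mD\mP^r\mF)_{v,f}\,,$$
where I used $\mathrm{diag}(p(v))=c^{-1}\mD$. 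This is the \emph{unbiased} content of the claim: a single sampled pair has expectation exactly equal to the target joint probability. Normalising by the corpus size and letting $l\to\infty$, the ergodic theorem replaces the empirical average by this mean, yielding the stated limit.

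Part (ii) is the mirror image. Here $\#(v,f)_{\underset{r}{\leftarrow}}=\sum_j \mathbbm{1}_{\{v_{j+r}=v\}}\mathbbm{1}_{\{f\in\sF_{v_j}\}}$, and summing the stationary expectation over the node $w=v_j$ sitting $r$ steps \emph{before} $v$ gives $c^{-1}\sum_w \mF_{w,f}\,\mD_{w,w}\,\mP^r_{w,v}=c^{-1}(\mF^\top\mD\mP^r)_{f,v}$, the transpose of part (i). Equivalently this follows from reversibility of the walk on an undirected graph — detailed balance $p(v)\mP_{v,w}=p(w)\mP_{w,v}$, i.e.\ $c^{-1}\mA$ symmetric — under which looking $r$ steps backwards is the same as looking forwards in the time-reversed chain.

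The expectation identities are one-line matrix manipulations; the real work is justifying the convergence. I expect the main obstacle to be showing that the sliding-window ``snippet'' process $(v_j,\dots,v_{j+r})_j$ is itself stationary and ergodic — this is inherited from ergodicity of the base chain, since a finite window of a finite-state ergodic chain is a factor of it and hence ergodic — so that the Markov-chain law of large numbers applies and the time-average converges in probability to its stationary mean. A secondary point requiring care is the normalisation by $|\sD_{\underset{r}{\rightarrow}}|$: rigorously one applies the ergodic theorem to numerator and denominator separately and passes to the ratio, with the truncation at $l-t$ asymptotically negligible, the probability interpretation being cleanest when the denominator is read as the count of sampled source positions.
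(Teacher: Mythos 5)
Your proposal is correct and follows essentially the same route as the paper: write the count as a sum of indicators over source positions, compute the stationary expectation $c^{-1}(\mD\mP^r\mF)_{v,f}$ using $\mathrm{diag}(p(v))=c^{-1}\mD$ and $\mF$ as Bernoulli parameters, and conclude by a law of large numbers, with the backward case handled by symmetry/transposition. The only (cosmetic) difference is the limit step: the paper justifies convergence concretely via Bernstein's weak law of large numbers, explicitly bounding $\mathrm{Cov}(Y_i,Y_j)$ and showing it vanishes as $j-i\to\infty$ through convergence of $\mP^{j-(i+r)}$ to the stationary distribution, whereas you invoke the ergodic theorem for the stationary sliding-window process — both are valid, and your remark that the denominator should be read as the number of source positions matches the paper's implicit treatment.
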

\begin{proof}
    See Appendix \ref{app:proofs}.
\end{proof}


\begin{restatable}{lemma}{lemmaplim}\label{lem:MUSAE}
The empirical statistics of node-feature pairs obtained from random walks give unbiased estimates of the joint probability of observing feature $f\!\in \!\sF$ $r$ steps \textbf{either side} of node $v \!\in\! \sV$, given by:
\begin{align*}
    \underset{l\rightarrow \infty}{\text{plim}}\tfrac{\# (v,f)_r}{|\sD_r|}&=
    c^{-1}(\mD\mP^r \mF)_{v,f}
    \ ,
\end{align*}
\end{restatable}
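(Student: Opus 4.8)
The plan is to derive Lemma~\ref{lem:MUSAE} directly from the preceding lemma, exploiting the fact that $\sD_r$ is the \emph{multiset} union $\sD_{\underset{r}{\rightarrow}} \cup \sD_{\underset{r}{\leftarrow}}$. Because a multiset union simply adds multiplicities, the two bookkeeping identities
\[
\#(v,f)_r = \#(v,f)_{\underset{r}{\rightarrow}} + \#(v,f)_{\underset{r}{\leftarrow}},
\qquad
|\sD_r| = |\sD_{\underset{r}{\rightarrow}}| + |\sD_{\underset{r}{\leftarrow}}|
\]
hold exactly for every finite sequence length $l$. I would record these first, since they reduce the claim to understanding a ratio of sums.

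Next I would rewrite the target quantity as a convex combination of the two one-sided ratios,
\[
\frac{\#(v,f)_r}{|\sD_r|}
= \lambda_l\,\frac{\#(v,f)_{\underset{r}{\rightarrow}}}{|\sD_{\underset{r}{\rightarrow}}|}
+ (1-\lambda_l)\,\frac{\#(v,f)_{\underset{r}{\leftarrow}}}{|\sD_{\underset{r}{\leftarrow}}|},
\qquad
\lambda_l = \frac{|\sD_{\underset{r}{\rightarrow}}|}{|\sD_r|}\in[0,1].
\]
The advantage of this form is that the (random) weight $\lambda_l$ never needs to be computed or controlled: if $X_l \to a$ and $Y_l \to a$ in probability and $\lambda_l\in[0,1]$, then $|\lambda_l X_l + (1-\lambda_l)Y_l - a| \le |X_l-a| + |Y_l-a| \to 0$ in probability, so the weighted average converges to the common limit $a$. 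Hence it suffices to show that the forward and backward ratios have the \emph{same} probability limit.

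By the preceding lemma those limits are $c^{-1}(\mD\mP^r\mF)_{v,f}$ and $c^{-1}(\mF^{\top}\mD\mP^r)_{f,v}$, so the whole argument turns on identifying these two matrix entries; this is the one genuinely substantive step rather than bookkeeping. Here I would invoke reversibility of the degree-weighted walk: since $\mA$ is symmetric and $\mP = \mD^{-1}\mA$, we have $\mP^{\top} = \mA\mD^{-1} = \mD\mP\mD^{-1}$, so $(\mP^{\top})^r = \mD\mP^r\mD^{-1}$ and therefore $(\mD\mP^r)^{\top} = (\mP^{\top})^r\mD = \mD\mP^r$. Thus $\mD\mP^r$ is symmetric, which gives $\mF^{\top}\mD\mP^r = (\mD\mP^r\mF)^{\top}$ and in particular $(\mF^{\top}\mD\mP^r)_{f,v} = (\mD\mP^r\mF)_{v,f}$. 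The backward limit then coincides with the forward one, and the convex-combination argument delivers $\underset{l\to\infty}{\text{plim}}\,\#(v,f)_r/|\sD_r| = c^{-1}(\mD\mP^r\mF)_{v,f}$.

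I expect the symmetry $\mD\mP^r = (\mD\mP^r)^{\top}$ to be the conceptual heart of the proof, as it is exactly what permits merging the ``before'' and ``after'' statistics into a single factorized matrix; everything else is either the exact count decomposition or the soft fact that convergence in probability passes through a bounded convex combination. The only mild care required is to note that $\lambda_l$ may fluctuate with $l$, which is precisely why I route the limit through the weight-free bound above rather than trying to evaluate $\lambda_l$ (which, by stationarity, tends to $\tfrac12$).
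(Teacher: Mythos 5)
Your proof is correct and takes essentially the same route as the paper's: decompose $\#(v,f)_r$ into the forward and backward sub-corpus counts, invoke the preceding lemma for the two one-sided limits, and use symmetry of $\mA$ (equivalently, symmetry of $\mD\mP^r$) to identify $(\mF^{\top}\mD\mP^r)_{f,v}$ with $(\mD\mP^r\mF)_{v,f}$. The only difference is bookkeeping: the paper writes $|\sD_r| = 2|\sD_{\overset{r}{\rightarrow}}| = 2|\sD_{\overset{r}{\leftarrow}}|$ and averages the two ratios with weight $\tfrac{1}{2}$, whereas your $\lambda_l$-weighted convex combination reaches the same limit without assuming the sub-corpora have exactly equal size -- a marginally more careful but equivalent step.
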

\begin{proof}
    See Appendix \ref{app:proofs}.
\end{proof}
\noindent Marginalizing gives unbiased estimates of stationary probability distributions of nodes and features:
\begin{align*}
\underset{l\rightarrow \infty}{plim}\tfrac{\# (v)}{|\sD_r|}
&
= \tfrac{deg(v)}{c}
=  c^{-1}\mD_{v,v}
\\
\underset{l\rightarrow \infty}{plim}\tfrac{\# (f)}{|\sD_r|}
&
= \sum_{v|f \in \sF_v} \! \tfrac{deg(v)}{c}
= c^{-1}\mE_{f,f}
\end{align*}

\begin{theorem}\label{thm:MUSAE}
MUSAE embeddings approximately factorize the node-feature PMI matrix:
$$	
log\left(
c\,\mP^r\mF\mE^{-1}
\right)-\log b,\quad \text{for}\ r=1, ...\,, t .
$$
\end{theorem}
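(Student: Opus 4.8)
The plan is to combine the SGNS factorization identity stated just above the lemmas with the probabilistic estimates supplied by Lemma~\ref{lem:MUSAE} and the subsequent marginalization, reducing the claim to a direct entrywise simplification that exploits the diagonal structure of $\mD$ and $\mE$. Everything is carried out for a fixed proximity $r$, and since the argument never uses special properties of $r$ it applies verbatim for each $r\in\{1,\dots,t\}$.

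First I would take the argument of the logarithm on the right-hand side of the SGNS identity and rewrite it as a ratio of empirical frequencies by dividing numerator and denominator by $|\sD_r|$, namely
$$
\frac{\#(v,f)_r/|\sD_r|}{(\#(v)_r/|\sD_r|)\,(\#(f)_r/|\sD_r|)}.
$$
Each of these three frequencies has a known probability limit as $l\to\infty$: Lemma~\ref{lem:MUSAE} gives $\#(v,f)_r/|\sD_r| \to c^{-1}(\mD\mP^r\mF)_{v,f}$, while the marginalization immediately following it gives $\#(v)/|\sD_r| \to c^{-1}\mD_{v,v}$ and $\#(f)/|\sD_r| \to c^{-1}\mE_{f,f}$. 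Since the map $(a,b,c)\mapsto a/(bc)$ is continuous wherever $b,c\neq 0$, the continuous mapping theorem lets me pass these limits through the quotient to obtain
$$
\frac{c^{-1}(\mD\mP^r\mF)_{v,f}}{c^{-1}\mD_{v,v}\cdot c^{-1}\mE_{f,f}}
= c\,\frac{(\mD\mP^r\mF)_{v,f}}{\mD_{v,v}\,\mE_{f,f}}.
$$

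The remaining step is to fold this entrywise expression into the matrix form claimed. Because $\mD$ is diagonal we have $(\mD\mP^r\mF)_{v,f}=\mD_{v,v}(\mP^r\mF)_{v,f}$, so the factor $\mD_{v,v}$ cancels; and because $\mE$ is diagonal, dividing by $\mE_{f,f}$ is exactly right-multiplication by $\mE^{-1}$, leaving $c\,(\mP^r\mF\mE^{-1})_{v,f}$. Substituting back into the SGNS identity yields ${g^r_v}^\top h^r_f \approx \log\big(c\,(\mP^r\mF\mE^{-1})_{v,f}\big)-\log b$ for every $v\in\sV$ and $f\in\sF$, which is precisely the entrywise reading of the stated PMI matrix.

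I do not anticipate a genuine obstacle, since every ingredient is already established; the only real care is correct bookkeeping of the two diagonal cancellations and justifying that the $\text{plim}$ of the ratio equals the ratio of the $\text{plim}$s via the continuous mapping theorem. It is also worth remarking that the logarithm is applied elementwise and that its argument is positive wherever the corresponding counts are nonzero, so the matrix is well defined on the support of the observed statistics.
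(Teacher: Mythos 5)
Your proposal is correct and follows essentially the same route as the paper's proof: both rewrite the PMI argument as $\big(\tfrac{\#(v,f)_r}{|\sD_r|}\big)/\big(\tfrac{\#(v)_r}{|\sD_r|}\tfrac{\#(f)_r}{|\sD_r|}\big)$, pass to probability limits via Lemma~\ref{lem:MUSAE} and the marginal estimates, and cancel the diagonal factors to obtain $c\,(\mP^r\mF\mE^{-1})_{v,f}$. Your explicit invocation of the continuous mapping theorem and the remark on positivity of the log argument are just slightly more detailed bookkeeping of the same computation the paper writes compactly as $\big((c\mD^{-1})(c^{-1}\mD\mP^r\mF)(c\mE^{-1})\big)_{v,f}$.
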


\begin{proof}

\begin{align*}
    \tfrac{\# (v,f)_r |\sD_r|}{\# (f)_r \# (v)_r}
    & = 
    \big( \tfrac{\# (v,f)_r}{|\sD_r|} \big) /
    \big( \tfrac{\# (f)_r}{|\sD_r|} \tfrac{\# (v)_r}{|\sD_r|}\big)
    \\
    & \xrightarrow{p}
    %
%
    \big((c \mD^{-1})(c^{-1}\mD\mP^r \mF)(c\mE^{-1} )\big)_{v,f}
    \\
    & =
    c ( \mP^r \mF\mE^{-1} )_{v,f}
    \qedhere
\end{align*}
\end{proof}
\color{black}

\subsection{Pooled case (AE)}

\begin{lemma}
The empirical statistics of node-feature pairs learned by the AE algorithm give unbiased estimates of mean joint probabilities over different path lengths:
\begin{equation*}
\underset{l\rightarrow \infty}{\text{plim}}\tfrac{\# (v,f)}{|\sD|}
=
\tfrac{c}{t}\big( \mD(\sum_{r=1}^t \mP^r) \mF\big)_{v,f}
\end{equation*}
\end{lemma}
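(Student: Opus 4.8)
The plan is to reduce the pooled (\textit{AE}) case to the single-scale result of Lemma~\ref{lem:MUSAE}, exploiting the fact that the \textit{AE} corpus is exactly the multiset union of the \textit{MUSAE} sub-corpora. Comparing Algorithm~\ref{AE_algo} with Algorithm~\ref{MUSAE_algo} line by line, for each source node $v_j$ and each $r\in\{1,\dots,t\}$ the tuples that \textit{AE} appends to the single multiset $\sD$ are precisely those that \textit{MUSAE} distributes into $\sD_{\overset{r}{\rightarrow}}$ and $\sD_{\overset{r}{\leftarrow}}$. Hence $\sD=\bigcup_{r=1}^t\sD_r$ as multisets, which yields the exact finite-sample identities $\#(v,f)=\sum_{r=1}^t\#(v,f)_r$ and $|\sD|=\sum_{r=1}^t|\sD_r|$.

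First I would rewrite the quantity of interest in a form to which Slutsky's theorem applies:
$$
\frac{\#(v,f)}{|\sD|}
=\sum_{r=1}^t\frac{\#(v,f)_r}{|\sD_r|}\cdot\frac{|\sD_r|}{|\sD|}.
$$
Lemma~\ref{lem:MUSAE} already identifies each factor $\#(v,f)_r/|\sD_r|$ with its probability limit $c^{-1}(\mD\mP^r\mF)_{v,f}$, so the only genuinely new ingredient is the limit of the mixing weights $|\sD_r|/|\sD|$.

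Next I would establish that the sub-corpora are asymptotically balanced, i.e. $\underset{l\to\infty}{\text{plim}}\,|\sD_r|/|\sD|=1/t$ for every $r$. The number of source--target pairs produced at scale $r$ in a walk of length $l$ equals $l-t$ independently of $r$; the forward part contributes $\sum_{j=1}^{l-t}|\sF_{v_{j+r}}|$ tuples to $\sD_{\overset{r}{\rightarrow}}$ and the backward part $\sum_{j=1}^{l-t}|\sF_{v_j}|$ tuples to $\sD_{\overset{r}{\leftarrow}}$. Because each walk is started from the stationary law $p(v)=\text{deg}(v)/c$, every $v_j$ is marginally stationary, so by the same ergodic/plim argument underlying Lemma~\ref{lem:MUSAE} each of these time averages converges to the \emph{scale- and direction-independent} constant $\sum_v p(v)|\sF_v|$. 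Dividing numerator and denominator of $|\sD_r|/|\sD|$ by $l$ then returns the common limit $1/t$.

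Finally, combining the three limits by the continuous-mapping and Slutsky theorems (the sum and products involve only finitely many sequences converging in probability) gives
$$
\underset{l\to\infty}{\text{plim}}\,\frac{\#(v,f)}{|\sD|}
=\sum_{r=1}^t\frac{1}{t}\,c^{-1}(\mD\mP^r\mF)_{v,f}
=\frac{1}{ct}\Big(\mD\big(\textstyle\sum_{r=1}^t\mP^r\big)\mF\Big)_{v,f},
$$
the mean over scales of the single-scale joint-probability estimates. The main obstacle is the balancing step: unlike the raw node-pair counts, the tuple counts $|\sD_r|$ are inflated by a random, node-dependent number of features, so the argument must show that this inflation carries a common stationary expectation across all scales and both directions before the weights $1/t$ can be read off. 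Once that is in hand, everything else is a direct application of Lemma~\ref{lem:MUSAE}.
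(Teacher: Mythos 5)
Your proof is correct and follows essentially the same route as the paper's: decompose the \textit{AE} corpus into the \textit{MUSAE} sub-corpora via $\#(v,f)=\sum_r\#(v,f)_r$ and $|\sD|=\sum_r|\sD_r|$ and invoke Lemma~\ref{lem:MUSAE}, the only difference being that the paper asserts $|\sD_r|=|\sD_s|$ exactly ``by construction'' whereas you prove the asymptotic balance $|\sD_r|/|\sD|\to 1/t$ via the stationarity/LLN argument and Slutsky --- which is the more defensible step, since exact equality can fail by boundary terms of the walk. Note also that your limiting constant $\tfrac{1}{ct}\big(\mD(\sum_{r=1}^t\mP^r)\mF\big)_{v,f}$ is the correct one (it is what Theorem~\ref{thm:AE} requires); the $\tfrac{c}{t}$ displayed in the lemma statement is a typo for $\tfrac{1}{ct}$.
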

\begin{proof}
By construction, 
$|\sD| \! =\! \sum_r|\sD_r|$, 
$\#(v,f)\! =\! \sum_r\#(v,f)_r$, 
$|\sD_r| \!=\! |\sD_s|\ \forall\ r,s \!\in\! \{1,..., t\}$ and so $|\sD_s| \!=\! t^{-1}|\sD|$. Combining with Lemma \ref{lem:MUSAE}, the result follows.
\end{proof}
\begin{theorem}\label{thm:AE}
AE embeddings approximately factorize the pooled node-feature matrix:
$$\log\big(\tfrac{c}{t}(\sum_{r=1}^t \textbf{P}^r )\textbf{F}\textbf{E}^{-1}\big)-\log b\ .
$$
\end{theorem}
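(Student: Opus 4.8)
The plan is to follow the template of the proof of Theorem~\ref{thm:MUSAE}, replacing the single-scale joint probability by its pooled counterpart. Since \emph{AE} runs SGNS once on the single corpus $\sD$, the learned embeddings satisfy $g_v^\top h_f \approx \log\big(\tfrac{\#(v,f)|\sD|}{\#(v)\#(f)}\big)-\log b$, so it suffices to identify the probability limit of the argument $\tfrac{\#(v,f)|\sD|}{\#(v)\#(f)}$ and match it with an entry of $\tfrac{c}{t}(\sum_{r=1}^t\mP^r)\mF\mE^{-1}$. As in the multi-scale case I would first rewrite this argument as $\big(\tfrac{\#(v,f)}{|\sD|}\big)\big/\big(\tfrac{\#(v)}{|\sD|}\,\tfrac{\#(f)}{|\sD|}\big)$, so that the problem splits into a joint term and two marginals.

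For the joint term I would invoke Lemma~\ref{lem:MUSAE} together with the pooling identities $\#(v,f)=\sum_r\#(v,f)_r$ and $|\sD|=t|\sD_r|$: since each per-scale ratio satisfies $\tfrac{\#(v,f)_r}{|\sD_r|}\to c^{-1}(\mD\mP^r\mF)_{v,f}$, averaging over the $t$ scales gives $\tfrac{\#(v,f)}{|\sD|}\to \tfrac{1}{tc}\big(\mD(\sum_{r=1}^t\mP^r)\mF\big)_{v,f}$, which is exactly the statement of the preceding pooled-frequency Lemma. For the marginals I would use $\#(v)=\sum_r\#(v)_r$, $\#(f)=\sum_r\#(f)_r$ and the per-scale marginal limits $\tfrac{\#(v)_r}{|\sD_r|}\to c^{-1}\mD_{v,v}$, $\tfrac{\#(f)_r}{|\sD_r|}\to c^{-1}\mE_{f,f}$ derived right after Lemma~\ref{lem:MUSAE}; because these limits are independent of $r$, averaging over scales returns the same values, so $\tfrac{\#(v)}{|\sD|}\to c^{-1}\mD_{v,v}$ and $\tfrac{\#(f)}{|\sD|}\to c^{-1}\mE_{f,f}$.

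Substituting and simplifying then closes the argument. Using that $\mD$ and $\mE$ are diagonal, I would write $\big(\mD(\sum_r\mP^r)\mF\big)_{v,f}=\mD_{v,v}\big((\sum_r\mP^r)\mF\big)_{v,f}$; the factor $\mD_{v,v}$ cancels against the node marginal, the two factors $c^{-1}$ in the denominator combine with the leading $\tfrac{1}{tc}$ to leave the constant $\tfrac{c}{t}$, and dividing by $\mE_{f,f}$ supplies the column scaling $\mE^{-1}$. This yields the limit $\tfrac{c}{t}\big((\sum_{r=1}^t\mP^r)\mF\mE^{-1}\big)_{v,f}$ for the PMI argument; taking logarithms and subtracting $\log b$ gives the stated factorized matrix.

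I do not expect a genuinely hard step, since this is the pooled analogue of Theorem~\ref{thm:MUSAE}. The only points requiring care are verifying that averaging the $r$-independent marginals over the $t$ scales reproduces the common value rather than introducing a spurious factor of $t$, and correctly tracking the single factor $\tfrac{1}{t}$ that distinguishes the pooled constant $\tfrac{c}{t}$ from the multi-scale constant $c$ appearing in Theorem~\ref{thm:MUSAE}.
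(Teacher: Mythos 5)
Your proposal is correct and follows essentially the same route as the paper: it combines the pooled-frequency lemma (obtained from Lemma~\ref{lem:MUSAE} via $\#(v,f)=\sum_r\#(v,f)_r$ and $|\sD_r|=t^{-1}|\sD|$) with the same joint-over-marginals decomposition used in the proof of Theorem~\ref{thm:MUSAE}, which is exactly what the paper means by ``analogous.'' The only remark worth making is that the constant you derive for the pooled joint limit, $\tfrac{1}{tc}\big(\mD(\sum_{r=1}^{t}\mP^r)\mF\big)_{v,f}$, is the correct one (the paper's pooled lemma prints $\tfrac{c}{t}$, evidently a typo), and it is precisely this value that produces the stated factor $\tfrac{c}{t}$ after dividing by the marginals $c^{-1}\mD_{v,v}$ and $c^{-1}\mE_{f,f}$.
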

\begin{proof}
Analogous to the proof of Theorem \ref{thm:MUSAE}.
\end{proof}

\begin{remark}
\textit{DeepWalk} is a special case of \textit{AE} with $\mF \!=\! \mI_{|\sV|}$.
\end{remark}
That is, \textit{DeepWalk} is equivalent to \textit{AE} if each node has a single unique feature. Thus $\mE \!=\! diag(\1^\top\mD\mI) \!=\! \mD$ and, by Theorem \ref{thm:AE}, the embeddings of \textit{DeepWalk}  factorize
$\log\big(\tfrac{c}{t}(\sum_{r=1}^t \mP^r )\mD^{-1}\big)-\log b$,
as derived by \cite{qiu2018network}.

\vspace{5pt}
\begin{remark}
\textit{Walklets} is a special case of \textit{MUSAE}  with $\mF \!=\! \mI_{|\sV|}$. 
\end{remark}
Thus, for $r=1,\dots,t$, the embeddings of \textit{Walklets} factorise
$
\log\left(c\,\mP^r \mD^{-1}\right)-\log b.
$

\vspace{5pt}
\begin{remark}\label{remark:MUSAE_EGO}
Appending an identity matrix $\mI$ to the feature matrices $\mF$ of AE and MUSAE (denoted $\left [\mF;\mI\right ]$) adds a unique feature to each node. The resulting algorithms, named \textit{AE-EGO} and \textit{MUSAE-EGO}, respectively, learn embeddings that approximately factorize the node-feature PMI matrices:
\begin{align*}
&\log\left(
c\,\mP^r\left [\mF;\mI\right ]\mE^{-1} \right)-\log b,\ \ \forall r\!\in\!\{1, ...,t\};
%
\\
\text{and}
\qquad
%
&\log\big(\tfrac{c}{t}(\sum_{r=1}^t \mP^r )\left [\mF;\mI\right ]\mE^{-1}\big)-\log b\ .
\end{align*}	
\end{remark}

\subsection{Complexity analysis}\label{sec:complexity}
 Assuming a constant number of features per source node, the corpus generation has runtime complexity of $\mathcal{O}(s\, l\,  t \, \tfrac{m}{n})$, where $m \!=\! \sum_{v\in \sV} | \sF_v|$ the total number of features across all nodes (with repetition), $q \!=\! |\sF|$, and $n \!=\! |\sV|$. With $b$ negative samples, the optimization runtime of a single asynchronous gradient descent epoch on \textit{AE} and the joint optimization runtime of \textit{MUSAE} embeddings is $\mathcal{O}( b \, d \, s \, l \, t \, \tfrac{m}{n})$. With $p$ truncated walks from each source node, the corpus generation complexity is $\mathcal{O}(p\,n \, l \, t \, m)$ and the model optimization runtime is $\mathcal{O}( b \, d\,  p\, n \, l \, t \, m)$. The runtime experiments (Section \ref{sec:experiments}) empirically support this analysis. 

Corpus generation has a memory complexity of $\mathcal{O}(s \, l \, t \, \tfrac{m}{n})$ while the same when generating $p$ truncated walks per node has a memory complexity of $\mathcal{O}(p\, n \, l \, t \, m)$. Storing the parameters of an \textit{AE} embedding has a memory complexity of $\mathcal{O}((n+q)\,\cdot d)$ and \textit{MUSAE} uses $\mathcal{O}((n+q)\,\cdot d )$ memory.

\begin{table}[h!]
\setlength{\tabcolsep}{7pt}

\caption{Descriptive statistics of attributed benchmark social network and webgraph datasets.}\label{tab:descriptive_statistics}
\centering{\scriptsize

\begin{tabular}{lrrrrrrcc}
\textbf{Dataset} & \textbf{Nodes} & \specialcell{\textbf{Clustering}\\\textbf{Coefficient}} & \textbf{Density} &\textbf{Diameter} &\specialcell{\textbf{Unique}\\\textbf{Features}}& \specialcell{\textbf{Features}\\\textbf{Per Node}}&\textbf{Classes}&\textbf{Task}\\
\hline
Cora & 2,708&0.094&0.002&19&1,432&18.174&7&Classification\\
Citeseer&3,327&0.130& 0.001&28&3.703&31.610&6&Classification\\
Pubmed  & 19,717&0.054&0.001&18&500&50.511&3&Classification\\
\hline
Facebook&22,470     &0.232\   &0.001&15&4,714&14.000&4&Classification\\[0.25em]
GitHub&37,700     &0.013\   &0.001&7&4,005&18.312&2&Classification\\[0.25em]
LastFM Asia&   7,624 &0.179&0.001&15&7,842&395.378&18&Classification\\[0.25em]
Deezer Europe& 28,281&0.096&0.001&21&30978&33.891&2&Classification\\[0.25em]
\hline
Wiki Chameleons &2,277      &0.314\   &0.012&11&3,132&21.545&--&Regression\\[0.25em]
Wiki Crocodiles &11,631     &0.026  &0.003&11&13,183&75.161&--&Regression\\[0.25em]
Wiki Squirrels  &5,201      &0.348  &0.015&10&3,148&26.474&--&Regression\\[0.25em]
\hline 
Twitch DE       &9,498      &0.047  &0.003&7&3,169&20.396&2&Classification\\[0.25em]
Twitch EN       &7,126      &0.042  &0.002&10&3,169&20.800&2&Classification\\[0.25em]
Twitch ES       &4,648      &0.084  &0.006&9&3,169&19.391&2&Classification\\[0.25em]
Twitch FR       &6,549      &0.054  &0.005&7&3,169&19.757&2&Classification\\[0.25em]
Twitch PT       &1,912      &0.131  &0.017&7&3,169&19.944&2&Classification\\[0.25em]
Twitch RU       &4,385      &0.049  &0.004&9&3,169&20.635&2&Classification\\[0.25em]
\hline
\end{tabular}
}
\end{table}

\vspace{-7mm}
\section{Experimental evaluation}\label{sec:experiments}

We evaluate the representations learned by \textit{AE}, \textit{MUSAE} and their \textit{EGO} extensions on several common downstream tasks: node classification, regression, and transfer learning across networks. We also report how number of nodes and dimensionality affect runtime. We use standard (well-established) node classification webgraph benchmark datasets (Cora, Citeseer \cite{lu2003link}, Pubmed \cite{namata2012query}) together with publicly available social network benchmarks -- LastFM Asia, Deezer Europe \cite{feather}). We also utilized social networks and web graphs that we collected (e.g Twitch, Facebook, Github, Wikipedia). Table \ref{tab:descriptive_statistics} shows statistics of the datasets used for evaluation.
\begin{itemize}
\item \textbf{Facebook:}
 A page-page graph of verified Facebook sites. Nodes correspond to official Facebook pages, links to mutual likes between sites. Node features are extracted from the site descriptions. The task is multi-class classification of the site category.
\item \textbf{GitHub:}
A social network where nodes correspond to developers who have starred at least 10 repositories and edges to mutual follower relationships. Node features are location, starred repositories, employer and e-mail address. The task is to classify nodes as web or machine learning developers.

\item \textbf{LastFM Asia:} An online social network of people who use the online music streaming site LastFM and live in Asia. The links represent reciprocal follower relationships and the vertex features describe the list of musicians liked by the users. The machine learning task is the prediction of nationality for the users of the site.
\item \textbf{Deezer Europe:} A user-user network of European members of the music streaming service Deezer. The links represent mutual friendships of the users. Node features are artists liked by the streamers and the related task is the classification of the users' gender.
\item \textbf{Wikipedia graphs:}
Wikipedia page-page networks on three topics: chameleons, crocodiles and squirrels. Nodes represent articles from the English Wikipedia (December 2018), edges reflect mutual links between them. Node features indicate the presence of particular nouns in the articles and the average monthly traffic (October 2017 - November 2018). The regression task is to predict the log average monthly traffic (December 2018).
\item \textbf{Twitch social networks:}
User-user networks where nodes correspond to Twitch users and links to mutual friendships. Node features are games liked, location and streaming habits. All datasets have the same set of node features enabling \textit{transfer learning} across networks. The associated task is binary classification of whether a streamer uses explicit language.
\end{itemize}
\vspace{-5mm}
\begin{table}[ht!]
\centering
\caption{Hyperparameters of ode embedding techniques and the supervised downstream models (classifiers and regressors).}
\vspace{2mm}
	\centering
	\footnotesize
    \begin{tabular}{lcc}
        \label{tab:our_params}
        \textbf{Parameter}          &\textbf{Notation}  &\textbf{Value}\\
        \hline
        Dimensions                  & $d$               & 128   \\
        Walk length                 & $l$               & 80    \\
        Walks per node              & $p$               & 10    \\
        Epochs                      & $e$               & 5     \\
        Window size                 & $t$               & 3     \\
        Negative samples            & $b$               & 5     \\
        Initial learning rate       &  $\alpha_{\text{max}}$   & 0.05  \\
        Final learning rate         & $\alpha_{\text{min}}$    & 0.025 \\
        \hline
        Regularization coefficient          &$\lambda$      & 0.01 \\
        Norm mixing parameter               &$\gamma$       & 0.5   \\
        \hline
    \end{tabular}

\end{table}

\subsection{Hyperparameter settings}\label{subsec:hyperparams}

Table \ref{tab:our_params} (top) shows the hyperparameters used for our algorithms, which are consistent with other random walk based approaches \cite{deepwalk, node2vec, struc2vec, perozzidontwalk}. Hyperparameters of other algorithms are in Appendices \ref{app:embedding} and \ref{app:gcn}. Downstream evaluation tasks use logistic and elastic net regression from \textit{Scikit-learn} \cite{pedregosa2011scikit} with standard library settings except for parameters reported in  Table \ref{tab:our_params} (bottom). 
\vspace{-4mm}
\begin{table}[!b]

\caption{Node classification: average micro $F_1$ on the test set score and standard error over 100 seeded runs (best \textit{unsupervised} result in red, best \textit{supervised} result in blue, our proposed embedding methods in italics).}\label{tab:classification_evaluation}
\centering{\footnotesize
\setlength{\tabcolsep}{4pt}
    \begin{tabular}{lcccccccc}
     \textbf{} & \textbf{Facebook} &\textbf{GitHub}&\textbf{Twitch PT}& \textbf{LastFM}  & \textbf{Deezer}& \textbf{Cora}  & \textbf{Citeseer} & \textbf{PubMed} \\     \hline
\text{DeepWalk}&$.863\pm.001$&$.858\pm.001$&$.672\pm.007$&$.765\pm.003$&$.556\pm.001$&$.833\pm.004$&$.603\pm.007$&$.802\pm.001$\\[0.5em]
\text{LINE}$_2$&$.875\pm.001$&$.858\pm.001$&$.670\pm.001$&$.842\pm.005$&$.565\pm.002$&$.777\pm.004$&$.542\pm.006$&$.799\pm.001$\\[0.5em]
\text{Node2Vec}&$.890\pm.001$&$.859\pm.001$&$.686\pm.004$&$.791\pm.002$&$.563\pm.001$&$.840\pm.003$&$.622\pm.005$&$.810\pm.002$\\[0.5em]
\text{Walklets}&$.887\pm.001$&$.860\pm.001$&$.671\pm.006$&$.849\pm.002$&$.562\pm.001$&$.843\pm.003$&$.630\pm.006$&$.815\pm.001$\\[0.5em]
\text{NetMF}&$.795\pm.005$&$.839\pm.001$&$.647\pm.002$&$.826\pm.001$&$.561\pm.001$&$.748\pm.002$&$.616\pm.002$&$.773\pm.005$\\[0.5em]
\text{HOPE}&$.702\pm.006$&$.800\pm.001$&$.623\pm.001$&$.764\pm.002$&$.562\pm.001$&$.716\pm.002$&$.583\pm.002$&$.705\pm.003$\\[0.5em]
\text{GraRep}&$.878\pm.002$&$.854\pm.003$&$.614\pm.001$&$.813\pm.001$&$.563\pm.002$&$.732\pm.002$&$.637\pm.002$&$.784\pm.004$\\[0.5em]\hline
\text{TADW}&$.765\pm.002$&$.748\pm.001$&$.659\pm.005$&$.586\pm.003$&$.636\pm.002$&$.819\pm.004$&$.734\pm.004$&$.862\pm.002$\\[0.5em]
\text{AANE}&$.796\pm.001$&$.856\pm.001$&$.661\pm.006$&$.750\pm.002$&$622\pm.001$&$.793\pm.001$&$.733\pm.004$&$\color{red}\mathbf{.867\pm.001}$\\[0.5em]
\text{ASNE}&$.797\pm.001$&$.839\pm.001$&$\color{red}\mathbf{.685\pm.006}$&$.789\pm.002$&$.625\pm.001$&$.830\pm.003$&$.718\pm.004$&$.846\pm.002$\\[0.5em]

\text{BANE}&$.868\pm.001$&$.762\pm.001$&$.664\pm.006$&$.581\pm.005$&$.558\pm.001$&$.807\pm.005$&$.713\pm.003$&$.823\pm.002$\\[0.5em]
\text{TENE}&$.731\pm.002$&$.850\pm.001$&$.664\pm.006$&$.679\pm.002$&$.622\pm.003$&$.829\pm.005$&$.681\pm.003$&$.842\pm.001$\\[0.5em]
\hline

\textit{AE}&$.888\pm.001$&$.863\pm.001$&$.672\pm.004$&$.866\pm.001$&$.658\pm.001$&$.835\pm.005$&$.739\pm.005$&$.839\pm.002$\\[0.5em]
\textit{AE-EGO}&$\color{red}\mathbf{.899\pm.001}$&$.863\pm.001$&$.671\pm.007$&$.868\pm.001$&$\color{red}\mathbf{.662\pm.001}$&$.835\pm.006$&$.739\pm.005$&$.840\pm.003$\\[0.5em]
\textit{MUSAE}&$.887\pm.001$&$\color{red}\mathbf{.864\pm.001}$&$.672\pm.006$&$\color{red}\mathbf{.870\pm.003}$&$\color{red}\mathbf{.662\pm.002}$&$.848\pm.004$&$\color{red}\mathbf{.742\pm.004}$&$.853\pm.001$\\[0.5em]
\textit{MUSAE-EGO}&$.894\pm.001$&$\color{red}\mathbf{.864\pm.001}$&$.671\pm.002$&$.865\pm.002$&$.661\pm.001$&$\color{red}\mathbf{.849\pm.004}$&$.741\pm.004$&$.851\pm.002$\\[0.5em]
\hline
\text{GCN}&$.932\pm.001$&$.865\pm.001$&$.695\pm.007$&$.874\pm.001$&$.635\pm.002$&$.879\pm.001$&$.742\pm.001$&$.875\pm.001$\\[0.5em]
\text{GraphSAGE}&$.814\pm.002$&$.854\pm.001$&$.631\pm.004$&$.871\pm.001$&$.661\pm.002$&$.881\pm.001$&$.747\pm.001$&$.864\pm.001$\\[0.5em]
\text{GAT}&$.919\pm.001$&$.864\pm.001$&$.678\pm.001$&$.864\pm.001$&$.625\pm.003$&$.867\pm.002$&$.740\pm.001$&$.869\pm.001$\\[0.5em]
\text{MixHop}&$\color{blue} \mathbf{.941\pm.002}$&$.850\pm.001$&$.630\pm.004$&$\color{blue}\mathbf{.888\pm.003}$&$\color{blue}\mathbf{.667\pm.001}$&$.859\pm.001$&$\color{blue}\mathbf{.780\pm.001}$&$\color{blue}\mathbf{.891\pm.001}$\\[0.5em]
\text{ClusterGCN}&$.937\pm.001$&$.859\pm.001$&$.654\pm.001$&$.761\pm.002$&$.634\pm.002$&$.845\pm.001$&$.737\pm.001$&$.844\pm.001$\\[0.5em]
\text{APPNP}&$.938\pm.001$&$\color{blue} \mathbf{.868\pm.001}$&$\color{blue}\mathbf{.755\pm.001}$&$.845\pm.002$&$.622\pm.002$&$\color{blue}\mathbf{.888\pm.001}$&$.754\pm.001$&$.884\pm.001$\\[0.5em]
\text{SGCONV}&$.836\pm.002$&$.829\pm.001$&$.663\pm.003$&$.846\pm.001$&$.601\pm.001$&$.878\pm.002$&$.763\pm.002$&$.807\pm.001$\\[0.5em]
\hline
    \end{tabular}
    
}

\end{table}

\subsection{Node attribute classification}

Embedding algorithms take as input a network graph and node features. Classification performance is evaluated by training $l_2$-regularized logistic/softmax regression to predict a (test) attribute given a node embedding. Table \ref{tab:classification_evaluation} compares our models to leading node embedding methods by micro averaged $F_1$ score over 100 seeded splits (80\% train - 20\% test). 

Whilst relative model performance varies slightly with dataset, the results show that our attributed embeddings tend to outperform other unsupervised methods, with closest performance achieved by \textit{ASNE} and \textit{AANE}. We also observe that (i) multi-scale embeddings tend to outperform their pooled counterparts; (ii) the additional identity features of \textit{EGO} models have no material impact on the task; and (iii) attributed node embeddings that consider only first-order neighbours show weak performance. As an upper bound comparison, we report performance of \textit{supervised} methods, which shows a typical performance advantage of $\sim\!4\%$ that can be within 1\%. 
\begin{figure*}[t]
	\centering
	\begin{tikzpicture}[scale=0.45,transform shape]
	\tikzset{font={\fontsize{13pt}{12}\selectfont}}
	\begin{groupplot}[group style={group size=3 by 1,horizontal sep=50pt, vertical sep=70pt, ylabels at=edge left},
yticklabel style={/pgf/number format/.cd,precision=2, /pgf/number format/fixed, /pgf/number format/fixed zerofill},	
	grid=major,
	grid style={dashed, gray!40},
	scaled ticks=false,
	inner axis line style={-stealth}]
	\nextgroupplot[
	ytick={0.4,0.5,0.6,0.7,0.8},
	xtick={6,12,18,24,30},
	ymin=0.3,
	ymax=0.81,
	xmin=1,
	xmax=32,
	xlabel={Training samples per class},
	ylabel={Test micro $F_1$ score},
	title=Facebook,]
	
	\addplot[mark=triangle*,opacity=0.8,mark options={black,fill=yellow},mark size=5pt]
	coordinates {
(3,0.385)
(6,0.428)
(9,0.463)
(12,0.471)
(15,0.503)
(18,0.508)
(21,0.519)
(24,0.535)
(27,0.539)
(30,0.549)};

	\addplot[mark=diamond*,opacity=0.8,mark options={black,fill=brown},mark size=5pt]
	coordinates {
(3,0.376)
(6,0.42)
(9,0.462)
(12,0.504)
(15,0.511)
(18,0.539)
(21,0.552)
(24,0.575)
(27,0.597)
(30,0.607)};

	\addplot[mark=*,opacity=0.8,mark options={black,fill=pink},mark size=3pt]
	coordinates {
(3,0.343)
(6,0.379)
(9,0.423)
(12,0.464)
(15,0.474)
(18,0.489)
(21,0.507)
(24,0.524)
(27,0.527)
(30,0.536)};
	
	\addplot[mark=square*,opacity=0.8,mark options={black,fill=green},mark size=3pt]
	coordinates {
(3,0.555)
(6,0.612)
(9,0.673)
(12,0.693)
(15,0.701)
(18,0.719)
(21,0.702)
(24,0.725)
(27,0.73)
(30,0.743)};

	\addplot[mark=triangle*,opacity=0.8,mark options={black,fill=orange},mark size=5pt]
	coordinates {
(3,0.372)
(6,0.41)
(9,0.445)
(12,0.495)
(15,0.479)
(18,0.503)
(21,0.528)
(24,0.545)
(27,0.561)
(30,0.572)
	};
	\addplot[mark=triangle*,opacity=0.8,mark options={black,fill=red},mark size=5pt]
	coordinates {
(3,0.545)
(6,0.644)
(9,0.703)
(12,0.743)
(15,0.749)
(18,0.756)
(21,0.761)
(24,0.77)
(27,0.774)
(30,0.774)};

	\addplot[mark=*,opacity=0.8,mark options={black,fill=red},mark size=3pt]
	coordinates {
(3,0.526)
(6,0.651)
(9,0.687)
(12,0.729)
(15,0.733)
(18,0.74)
(21,0.743)
(24,0.756)
(27,0.768)
(30,0.762)};

	\addplot[mark=triangle*,opacity=0.8,mark options={black,fill=blue},mark size=5pt]
	coordinates {
(3,0.56)
(6,0.67)
(9,0.722)
(12,0.764)
(15,0.765)
(18,0.767)
(21,0.768)
(24,0.779)
(27,0.787)
(30,0.788)};

	\addplot[mark=*,opacity=0.8,mark options={black,fill=blue},mark size=3pt]
	coordinates {
(3,0.53)
(6,0.654)
(9,0.699)
(12,0.747)
(15,0.752)
(18,0.756)
(21,0.765)
(24,0.773)
(27,0.782)
(30,0.778)};	

	\nextgroupplot[
	title=Github,
	ytick={0.4,0.5,0.6,0.7,0.8},
	xtick={6,12,18,24,30},
	ymin=0.3,
	ymax=0.81,
	xmin=1,
	xmax=32,
	xlabel=Training samples per class]
	\addplot[mark=triangle*,opacity=0.8,mark options={black, fill=yellow},mark size=5pt]
	coordinates {
(3,0.538)
(6,0.512)
(9,0.527)
(12,0.559)
(15,0.543)
(18,0.517)
(21,0.519)
(24,0.541)
(27,0.557)
(30,0.564)};

	\addplot[mark=diamond*,opacity=0.8,mark options={black,fill=brown},mark size=5pt]
	coordinates {
(3,0.472)
(6,0.521)
(9,0.533)
(12,0.541)
(15,0.57)
(18,0.595)
(21,0.609)
(24,0.616)
(27,0.623)
(30,0.631)};

	\addplot[mark=*,opacity=0.8,mark options={black,fill=pink},mark size=3pt]
	coordinates {
(3,0.547)
(6,0.56)
(9,0.584)
(12,0.591)
(15,0.604)
(18,0.622)
(21,0.635)
(24,0.64)
(27,0.658)
(30,0.664)};

	\addplot[mark=square*,opacity=0.8,mark options={black,fill=green},mark size=3pt]
	coordinates {
(3,0.534)
(6,0.541)
(9,0.544)
(12,0.531)
(15,0.54)
(18,0.53)
(21,0.527)
(24,0.529)
(27,0.551)
(30,0.547)};

	\addplot[mark=triangle*,opacity=0.8,mark options={black,fill=orange},mark size=5pt]
	coordinates {
(3,0.574)
(6,0.644)
(9,0.661)
(12,0.68)
(15,0.706)
(18,0.733)
(21,0.745)
(24,0.751)
(27,0.739)
(30,0.748)};

	\addplot[mark=triangle*,opacity=0.8,mark options={black,fill=red},mark size=5pt]
	coordinates {
(3,0.685)
(6,0.738)
(9,0.746)
(12,0.75)
(15,0.751)
(18,0.752)
(21,0.753)
(24,0.755)
(27,0.756)
(30,0.756)};

	\addplot[mark=*,opacity=0.8,mark options={black,fill=red},mark size=3pt]
	coordinates {
(3,0.681)
(6,0.725)
(9,0.725)
(12,0.733)
(15,0.741)
(18,0.74)
(21,0.741)
(24,0.741)
(27,0.742)
(30,0.745)};

	\addplot[mark=triangle*,opacity=0.8,mark options={black,fill=blue},mark size=5pt]
	coordinates {
(3,0.637)
(6,0.744)
(9,0.727)
(12,0.75)
(15,0.77)
(18,0.76)
(21,0.754)
(24,0.761)
(27,0.76)
(30,0.746)};
	
	\addplot[mark=*,opacity=0.8,mark options={black,fill=blue},mark size=3pt]
	coordinates {
(3,0.61)
(6,0.742)
(9,0.722)
(12,0.734)
(15,0.759)
(18,0.75)
(21,0.75)
(24,0.767)
(27,0.762)
(30,0.761)};	

	\nextgroupplot[
	ytick={0.4,0.5,0.6,0.7,0.8},
	xtick={6,12,18,24,30},
	ymin=0.3,
	ymax=0.81,
	xmin=1,
	xmax=32,
	xlabel=Training samples per class,
	title = Twitch PT,
	legend style={at={(1.63,0.5)},anchor=east,legend columns=1}]
	\addplot[mark=triangle*,opacity=0.8,mark options={black,fill=yellow},mark size=5pt]
	coordinates {
(3,0.478)
(6,0.48)
(9,0.481)
(12,0.477)
(15,0.48)
(18,0.482)
(21,0.487)
(24,0.495)
(27,0.494)
(30,0.494)};\addlegendentry{TADW}%

	\addplot[mark=diamond*,opacity=0.8,mark options={black,fill=brown},mark size=5pt]
	coordinates {
(3,0.498)
(6,0.517)
(9,0.523)
(12,0.528)
(15,0.532)
(18,0.531)
(21,0.535)
(24,0.535)
(27,0.538)
(30,0.538)};\addlegendentry{AANE}%

	\addplot[mark=*,opacity=0.8,mark options={black,fill=pink},mark size=3pt]
	coordinates {
(3,0.545)
(6,0.547)
(9,0.547)
(12,0.553)
(15,0.567)
(18,0.565)
(21,0.57)
(24,0.57)
(27,0.571)
(30,0.573)};\addlegendentry{ASNE}%
	
	\addplot[mark=square*,opacity=0.8,mark options={black,fill=green},mark size=3pt]
	coordinates {
(3,0.501)
(6,0.527)
(9,0.529)
(12,0.533)
(15,0.535)
(18,0.535)
(21,0.537)
(24,0.542)
(27,0.543)
(30,0.537)};\addlegendentry{BANE}%

	\addplot[mark=triangle*,opacity=0.8,mark options={black,fill=orange},mark size=5pt]
	coordinates {
(3,0.524)
(6,0.541)
(9,0.543)
(12,0.546)
(15,0.551)
(18,0.549)
(21,0.55)
(24,0.553)
(27,0.557)
(30,0.559)};\addlegendentry{TENE}%
	\addplot[mark=triangle*,opacity=0.8,mark options={black,fill=red},mark size=5pt]
	coordinates {
(3,0.508)
(6,0.526)
(9,0.527)
(12,0.544)
(15,0.547)
(18,0.549)
(21,0.55)
(24,0.553)
(27,0.554)
(30,0.559)};\addlegendentry{AE}%
	\addplot[mark=*,opacity=0.8,mark options={black,fill=red},mark size=3pt]
	coordinates {
(3,0.497)
(6,0.528)
(9,0.539)
(12,0.552)
(15,0.554)
(18,0.554)
(21,0.555)
(24,0.555)
(27,0.556)
(30,0.559)
	};\addlegendentry{MUSAE}%
	\addplot[mark=triangle*,opacity=0.8,mark options={black,fill=blue},mark size=5pt]
	coordinates {
(3,0.507)
(6,0.532)
(9,0.537)
(12,0.547)
(15,0.553)
(18,0.553)
(21,0.557)
(24,0.557)
(27,0.562)
(30,0.564)

	};\addlegendentry{AE EGO}%
	\addplot[mark=*,opacity=0.8,mark options={black,fill=blue},mark size=3pt]
	coordinates {
(3,0.507)
(6,0.53)
(9,0.539)
(12,0.55)
(15,0.554)
(18,0.554)
(21,0.555)
(24,0.557)
(27,0.559)
(30,0.563)

	};\addlegendentry{MUSAE EGO}%
	\end{groupplot}
	\end{tikzpicture}
	\caption{The $k$-shot node classification  performance for varying $k$, evaluated by average micro $F_1$ scores in the test set over a 100 seeded train-test splits of the vertices.}\label{fig:facebook_ratio}

\end{figure*}
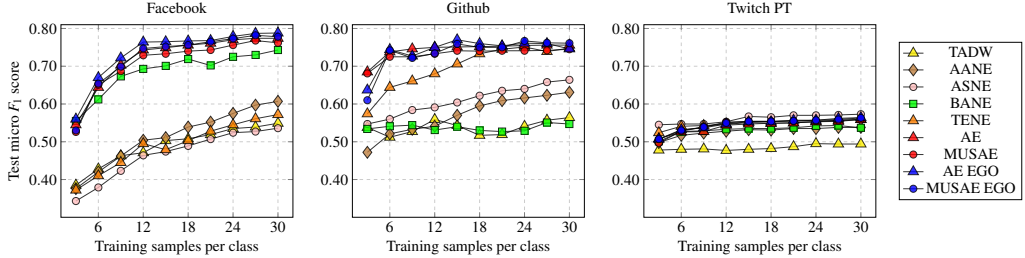

\vspace{-5mm}

We also test the \textit{few-shot learning} ability of attributed embeddings by repeating the above experiment, but training the logistic regression model with only $k$ randomly selected samples per class. Results for a representative selection of datasets and $k \!\in\! \{3, 
..., 30\}$ are shown in Figure \ref{fig:facebook_ratio}.
Our attributed embeddings show a material performance improvement at few shot learning over other unsupervised methods, particularly for the larger data sets (Facebook and Github). We also observe a modest performance benefit for the \textit{EGO} models, suggesting that the additional network structure they encode is useful when data is limited.

\begin{table}[h!]
\vspace{-5mm}
\caption{Node attribute regression with embedding features: average test $R^2$ and standard error calculated from a 100 splits for predicting monthly website traffic (best results in bold).}\label{tab:basic_evaluation}
\centering{\footnotesize
\setlength{\tabcolsep}{4pt}
 \begin{tabular}{lccc}
             &\specialcell{\textbf{Wikipedia}\\ \textbf{Chameleons}} &\specialcell{\textbf{Wikipedia}\\ \textbf{Crocodiles}} &\specialcell{\textbf{Wikipedia}\\ \textbf{Squirrels}}\\ \hline
            \text{DeepWalk}  & $.375\pm.004$ & $.553\pm.002$  &  $.170\pm.001$ \\[0.25em]
            \text{LINE}$_2$  & $.381\pm.003$ & $.586\pm.001$  &  $.232\pm.002$ \\[0.25em]
            \text{Node2Vec}  & $.414\pm.003$ & $.574\pm.001$  &  $.174\pm.002$ \\[0.25em]
            \text{Walklets}  & $.426\pm.003$ & $.625\pm.001$  &  $.249\pm.002$ \\[0.25em]
            \text{NetMF}  & $.440\pm.003$ & $.629\pm.002$  &  $.099\pm.002$ \\[0.25em]
            \text{HOPE}  & $.380\pm.002$ & $.571\pm.001$  &  $.175\pm.001$ \\[0.25em]
            \text{GraRep}  &$.520\pm.004$ & $.696\pm.002$  &  $.301\pm.001$ \\[0.25em]
            \hline 
            \text{TADW}      & $.527\pm.003$ & $.636\pm.001$   & $.271\pm.002$ \\[0.25em]
            \text{AANE}      & $.598\pm.007$ & $.732\pm.002$   & $.287\pm.002$ \\[0.25em]
            \text{ASNE}      & $.440\pm.009$ & $.572\pm.003$   & $.229\pm.005$ \\[0.25em]
            \text{BANE}      & $.464\pm.003$ & $.617\pm.001$   & $.168\pm.002$ \\[0.25em]
            \text{TENE}      & $.494\pm.020$ & $.701\pm.003$   & $.321\pm.007$ \\[0.25em]
            \hline
            \textit{AE}      &  $.642\pm.006$ &  $\usecond{.743}\pm.003$  & $.291\pm.006$ \\[0.25em]
            \textit{AE-EGO}  &$.644\pm.009$   & $.732\pm.002$   & $.283\pm.006$ \\[0.25em]
            \textit{MUSAE}   & $\mathbf{.658\pm.004}$ &   $.736\pm.003$   &$\usecond{.338}\pm.007$  \\[0.25em]
            \textit{MUSAE-EGO}&$\usecond{.653}\pm.011$ &$\mathbf{.747\pm.003}$ &$\mathbf{.354\pm.009}$\\[0.25em]
            \hline
        \end{tabular}
    }
\vspace{-5mm}
\end{table}

\subsection{Node attribute regression}

We compare the ability of our attributed embeddings against those of other unsupervised methods at predicting real valued node attributes. For each model, we train embeddings on an unsupervised basis and learn regression parameters of an elastic net to predict the log average web traffic (attribute) of each page (node) of the Wikipedia datasets (created for this task). 
Table \ref{tab:basic_evaluation} reports average test $R^2$ (explained variance) and standard error over 100 seeded (80\% train - 20\% test) splits. 
This shows that: 
(i) our attributed embeddings tend to outperform all other methods at the regression task; 
(ii) multi-scale methods (e.g. \textit{MUSAE}) tend to outperform pooled methods (e.g. \textit{AE}); 
(iii) the additional network information of the \textit{EGO} models appears beneficial, but only in the multi-scale case.
Overall, \textit{MUSAE-EGO} outperforms the best baseline for each dataset by 2-10\%.

\vspace{-5mm}
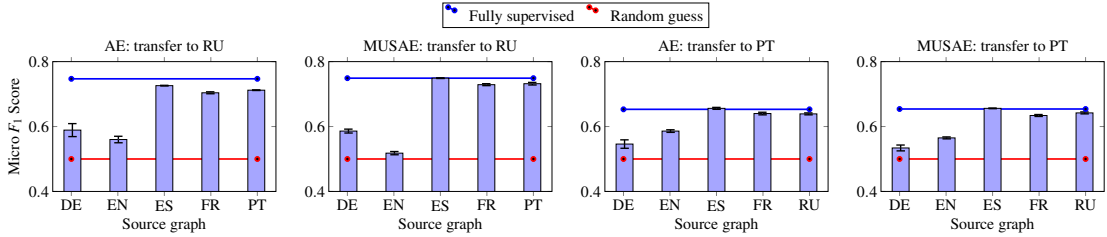
\begin{figure}[h!]
	\centering
	\begin{tikzpicture}[scale=0.4,transform shape]
	\tikzset{font={\fontsize{15pt}{12}\selectfont}}
	\begin{groupplot}[group style={group size=4 by 1,
		horizontal sep=50pt, vertical sep=70pt,ylabels at=edge left},
	grid style={dashed, gray!40},
	scaled ticks=false,
	inner axis line style={-stealth}]

  \nextgroupplot[
  title=AE: transfer to RU,
  ybar=-.58cm,
 bar width=16pt, 
  ytick ={0.0,0.2,0.4,0.6,0.8},
  height=5.9cm,
  width=9cm,
  ylabel={Micro $F_1$ Score},
  xlabel={Source graph},
  ymin=0.4, ymax=0.8,    
  xticklabels={DE, EN, ES, FR, PT},
  xtick={1,2,3,4,5},
  	legend style = { column sep = 10pt, legend columns = -1, legend to name = grouplegend} 
  ]
 \addplot[draw=blue, mark = *,ultra thick, smooth] 
    coordinates {(1.0,0.747) (5.0,0.747)};\addlegendentry{Fully supervised}
\addplot[draw=red,mark=*, ultra thick,smooth] 
    coordinates {(1.00,0.5) (5.00,0.5)};\addlegendentry{Random guess}
  \addplot [fill=blue!35,error bars/.cd,y dir=both,y explicit, error bar style={line width=1.3pt},
    error mark options={
      rotate=90,
      mark size=4pt,
      line width=1.3pt
    }] coordinates {(1,0.589)+- (0.020,0.020)};
  \addplot [fill=blue!35,error bars/.cd,y dir=both,y explicit, error bar style={line width=1.3pt},
    error mark options={
      rotate=90,
      mark size=4pt,
      line width=1.3pt
    }] coordinates {(2,0.56)+- (0.010,0.010)};
  \addplot [fill=blue!35,error bars/.cd,y dir=both,y explicit, error bar style={line width=1.3pt},
    error mark options={
      rotate=90,
      mark size=4pt,
      line width=1.3pt
    }] coordinates {(3,0.726)+- (0.001,0.001)};
  \addplot [fill=blue!35,error bars/.cd,y dir=both,y explicit, error bar style={line width=1.3pt},
    error mark options={
      rotate=90,
      mark size=4pt,
      line width=1.3pt
    }] coordinates {(4,0.704)+- (0.003,0.003)};
  \addplot [fill=blue!35,error bars/.cd,y dir=both,y explicit, error bar style={line width=1.3pt},
    error mark options={
      rotate=90,
      mark size=4pt,
      line width=1.3pt
    }] coordinates {(5,0.712)+- (0.001,0.001)};

	  \nextgroupplot[
  title=MUSAE: transfer to RU,
  ybar=-.58cm,
 bar width=16pt, 
  ytick ={0.0,0.2,0.4,0.6,0.8},
  height=5.9cm,
  width=9cm,
  xlabel={Source graph},
  ymin=0.4, ymax=0.8,    
  xticklabels={DE, EN, ES, FR, PT},
  xtick={1,2,3,4,5}
  ]
 \addplot[draw=blue, mark = *,ultra thick, smooth] 
    coordinates {(1.0,0.749) (5.0,0.749)};
\addplot[draw=red,mark=*, ultra thick,smooth] 
    coordinates {(1.00,0.5) (5.00,0.5)};
  \addplot [fill=blue!35,error bars/.cd,y dir=both,y explicit, error bar style={line width=1.3pt},
    error mark options={
      rotate=90,
      mark size=4pt,
      line width=1.3pt
    }] coordinates {(1,0.586)+- (0.006,0.006)};
  \addplot [fill=blue!35,error bars/.cd,y dir=both,y explicit, error bar style={line width=1.3pt},
    error mark options={
      rotate=90,
      mark size=4pt,
      line width=1.3pt
    }] coordinates {(2,0.518)+- (0.005,0.005)};
  \addplot [fill=blue!35,error bars/.cd,y dir=both,y explicit, error bar style={line width=1.3pt},
    error mark options={
      rotate=90,
      mark size=4pt,
      line width=1.3pt
    }] coordinates {(3,0.749)+- (0.001,0.001)};
  \addplot [fill=blue!35,error bars/.cd,y dir=both,y explicit, error bar style={line width=1.3pt},
    error mark options={
      rotate=90,
      mark size=4pt,
      line width=1.3pt
    }] coordinates {(4,0.729)+- (0.003,0.003)};
  \addplot [fill=blue!35,error bars/.cd,y dir=both,y explicit, error bar style={line width=1.3pt},
    error mark options={
      rotate=90,
      mark size=4pt,
      line width=1.3pt
    }] coordinates {(5,0.732)+- (0.004,0.004)};

  \nextgroupplot[
  title=AE: transfer to PT,
  ybar=-.58cm,
 bar width=16pt, 
  ytick ={0.0,0.2,0.4,0.6,0.8},
  height=5.9cm,
  width=9cm,
  xlabel={Source graph},
  ymin=0.4, ymax=0.8,  
  xticklabels={DE, EN, ES, FR, RU},
  xtick={1,2,3,4,5}
  ]
  
 \addplot[draw=blue, mark = *,ultra thick, smooth] 
    coordinates {(1.0,0.653) (5.0,0.653)};
\addplot[draw=red,mark=*, ultra thick,smooth] 
    coordinates {(1.00,0.5) (5.00,0.5)};
  \addplot [fill=blue!35,error bars/.cd,y dir=both,y explicit, error bar style={line width=1.3pt},
    error mark options={
      rotate=90,
      mark size=4pt,
      line width=1.3pt
    }] coordinates {(1,0.546)+- (0.013,0.013)};
  \addplot [fill=blue!35,error bars/.cd,y dir=both,y explicit, error bar style={line width=1.3pt},
    error mark options={
      rotate=90,
      mark size=4pt,
      line width=1.3pt
    }] coordinates {(2,0.586)+- (0.004,0.004) };
  \addplot [fill=blue!35,error bars/.cd,y dir=both,y explicit, error bar style={line width=1.3pt},
    error mark options={
      rotate=90,
      mark size=4pt,
      line width=1.3pt
    }] coordinates {(3,0.656)+- (0.003,0.003)};
  \addplot [fill=blue!35,error bars/.cd,y dir=both,y explicit, error bar style={line width=1.3pt},
    error mark options={
      rotate=90,
      mark size=4pt,
      line width=1.3pt
    }] coordinates {(4,0.64)+- (0.004,0.004) };
  \addplot [fill=blue!35,error bars/.cd,y dir=both,y explicit, error bar style={line width=1.3pt},
    error mark options={
      rotate=90,
      mark size=4pt,
      line width=1.3pt
    }] coordinates {(5,0.639)+- (0.003,0.003) };
  
	  \nextgroupplot[
  title=MUSAE: transfer to PT,
  ybar=-.58cm,
 bar width=16pt, 
  ytick ={0.0,0.2,0.4,0.6,0.8},
  height=5.9cm,
  width=9cm,
  xlabel={Source graph},
  ymin=0.4, ymax=0.8,  
  xticklabels={DE, EN, ES, FR, RU},
  xtick={1,2,3,4,5}
  ]
 \addplot[draw=blue, mark = *,ultra thick, smooth] 
    coordinates {(1.0,0.654) (5.0,0.654)};
\addplot[draw=red,mark=*, ultra thick,smooth] 
    coordinates {(1.00,0.5) (5.00,0.5)};
  \addplot [fill=blue!35,error bars/.cd,y dir=both,y explicit, error bar style={line width=1.3pt},
    error mark options={
      rotate=90,
      mark size=4pt,
      line width=1.3pt
    }] coordinates {(1,0.534)+- (0.009,0.009)};
  \addplot [fill=blue!35,error bars/.cd,y dir=both,y explicit, error bar style={line width=1.3pt},
    error mark options={
      rotate=90,
      mark size=4pt,
      line width=1.3pt
    }] coordinates {(2,0.565)+- (0.003,0.003) };
  \addplot [fill=blue!35,error bars/.cd,y dir=both,y explicit, error bar style={line width=1.3pt},
    error mark options={
      rotate=90,
      mark size=4pt,
      line width=1.3pt
    }] coordinates {(3,0.656)+- (0.001,0.001)};
  \addplot [fill=blue!35,error bars/.cd,y dir=both,y explicit, error bar style={line width=1.3pt},
    error mark options={
      rotate=90,
      mark size=4pt,
      line width=1.3pt
    }] coordinates {(4,0.634)+- (0.003,0.003) };
  \addplot [fill=blue!35,error bars/.cd,y dir=both,y explicit, error bar style={line width=1.3pt},
    error mark options={
      rotate=90,
      mark size=4pt,
      line width=1.3pt
    }] coordinates {(5,0.642)+- (0.003,0.003) };
  
	\end{groupplot}
	\node at ($(group c2r1) + (4.5cm,3.7cm)$) {\ref{grouplegend}}; 
	\end{tikzpicture}
\caption{Transfer learning: mean micro $F_1$ score and standard error over 10 runs with the Twitch Portugal and Russia datasets as target and each other dataset as the source graph.
Reference lines show performance for random guessing (red) and fully re-training on the target graph (blue).}
\label{tab:transfer_1}
\end{figure}
\vspace{-7mm}

\subsection{Transfer learning}
Neighbourhood based methods such as \textit{DeepWalk} \cite{deepwalk} learn node embeddings, typically with no mechanism to relate the embeddings of distinct graphs. Thus a regression model learned to predict attributes from node embeddings for one graph (as in any of the previous tasks) will not generally perform well given the node embeddings of another graph. 
The attributed models \textit{MUSAE} and \textit{AE}, however, learn representations of both nodes and features, in principle enabling information to be shared between graphs with common features. 
That is, if node and feature embeddings and a regression model are learned for a source graph $\gS$ with feature set $\sF$, then node embeddings can be learned for a target graph $\gT$ (with the same features $\sF$) that \textit{fit} with the feature embeddings learned for $\gS$ and so also its regression model. 
This amounts to \textit{transfer learning}, or zero-shot attribute prediction on the target graph. 
The information shared between graphs allows nodes of $\gT$ to be embedded into the same latent space as those of $\gS$. Since other unsupervised embedding methods do not learn explicit feature embeddings, they do not enable such transfer learning. 

The Twitch datasets contain disjoint vertex sets but a common set of features $\sF$. To perform transfer learning, we: (i) learn node and attribute embeddings for a source graph; (ii) train a regression model to predict a test attribute $f \!\notin\! \sF$; (iii) re-run the embedding algorithm on a target graph but with \textit{feature embedding parameters fixed} (as learned in step (i)); and (iv) use the regression model to predict attribute $f$ for target graph nodes. Figure \ref{tab:transfer_1} shows micro $F_1$ scores and standard error (over 10 runs) for the Twitch Portugal and Russia datasets as target and each other dataset as source graphs. The results confirm that \textit{MUSAE} and \textit{AE} learn feature embeddings that transfer between graphs with common features. Specifically, we see that performance always beats random guessing (red line) and that in some cases compares closely to re-training the feature embeddings and regression on the target graph (blue line).

\subsection{Scalability}
We demonstrate the efficiency of our algorithms with synthetic data, varying the number of nodes and features per node. Figure \ref{fig:runtime} shows the mean runtime for sets of 100 experiments on Erdos-Renyi graphs with the number of nodes as indicated, $8$ edges per node and the indicated number of unique features per node uniformly selected from a set of $2^{11}$. We used the hyperparameters in Table \ref{tab:our_params} except that we perform a single epoch with asynchronous gradient descent.
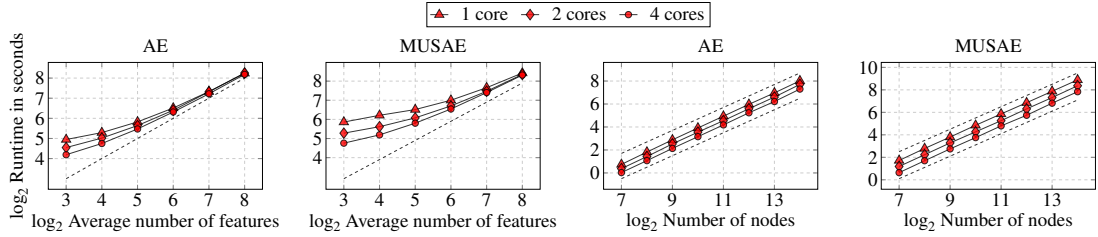
\begin{figure}[h!]
	\centering
	\begin{tikzpicture}[scale=0.4,transform shape]
	\tikzset{font={\fontsize{18pt}{12}\selectfont}}
	\begin{groupplot}[group style={group size=4 by 1,
		horizontal sep=60pt, vertical sep=60pt,ylabels at=edge left},
	width=0.6\textwidth,
	height=0.4\textwidth,
	grid=major,
	title=AE,	
	grid style={dashed, gray!40},
	scaled ticks=false,
	inner axis line style={-stealth}]
	\nextgroupplot[ytick={4,5,6,7,8},
	xtick={3,4,5,6,7,8},
	xlabel=$\log_2$ Average number of features,
	ylabel=$\log_2$ Runtime in seconds,
	enlargelimits=0.1,
	legend style = { column sep = 10pt, legend columns = -1, legend to name = grouplegend,}]
	
	\addplot[mark=triangle*,opacity=0.8,mark options={black,fill=red},mark size=5pt]
	coordinates {

(3,4.947)
(4,5.291)
(5,5.823)
(6,6.513)
(7,7.328)
(8,8.241)

	};\addlegendentry{1 core}%
	\addplot[mark=diamond*,opacity=0.8,mark options={black,fill=red},mark size=5pt]
	coordinates {
(3,4.547)
(4,5.019)
(5,5.64)
(6,6.403)
(7,7.319)
(8,8.252)

	};\addlegendentry{2 cores}%
	\addplot[mark=*,opacity=0.8,mark options={black,fill=red},mark size=3pt]
	coordinates {
(3,4.191)
(4,4.75)
(5,5.471)
(6,6.314)
(7,7.219)
(8,8.183)

	};\addlegendentry{4 cores}%
	\addplot[opacity=0.8,dashed,thick,mark options={black,fill=gray},mark size=5pt]
	coordinates {
		(3,3)
		(4,4)
		(5,5)		
		(6,6)
		(7,7)
		(8,8)

	};

	\nextgroupplot[ytick={4,5,6,7,8},
	xtick={3,4,5,6,7,8},
	xlabel=$\log_2$ Average number of features,
	enlargelimits=0.1,
	title=MUSAE,		
	legend style = { column sep = 10pt, legend columns = -1, legend to name = grouplegend,}]
	\addplot[mark=triangle*,opacity=0.8,mark options={black,fill=red},mark size=5pt]
	coordinates {

(3,5.861)
(4,6.213)
(5,6.514)
(6,7.002)
(7,7.657)
(8,8.424)

	};\addlegendentry{1 core}%

	\addplot[mark=diamond*,opacity=0.8,mark options={black,fill=red},mark size=5pt]
	coordinates {
(3,5.28)
(4,5.619)
(5,6.079)
(6,6.703)
(7,7.47)
(8,8.348)

	};\addlegendentry{2 cores}%
	\addplot[mark=*,opacity=0.8,mark options={black,fill=red},mark size=3pt]
	coordinates {
(3,4.758)
(4,5.19)
(5,5.801)
(6,6.543)
(7,7.401)
(8,8.317)

	};\addlegendentry{4 cores}%

	\addplot[opacity=0.8,dashed,thick,mark options={black,fill=gray},mark size=5pt]
	coordinates {
		(3,2.9)
		(4,3.9)
		(5,4.9)		
		(6,5.9)
		(7,6.9)
		(8,7.9)
	};

	\nextgroupplot[ytick={0,2,4,6,8,10},
	xtick={7,9,11,13,15},
	xlabel=$\log_2$ Number of nodes,
	enlargelimits=0.1,
	legend style = { column sep = 10pt, legend columns = -1, legend to name = grouplegend,}]
	
	\addplot[mark=triangle*,opacity=0.8,mark options={black,fill=red},mark size=5pt]
	coordinates {
(7,0.743)
(8,1.818)
(9,2.882)
(10,3.914)
(11,4.941)
(12,5.972)
(13,6.969)
(14,8.019)

	};\addlegendentry{1 core}%
	\addplot[mark=diamond*,opacity=0.8,mark options={black,fill=red},mark size=5pt]
	coordinates {
(7,0.376)
(8,1.426)
(9,2.53)
(10,3.548)
(11,4.55)
(12,5.588)
(13,6.604)
(14,7.705)

	};\addlegendentry{2 cores}%
	\addplot[mark=*,opacity=0.8,mark options={black,fill=red},mark size=3pt]
	coordinates {
(7,0.031)
(8,1.071)
(9,2.134)
(10,3.159)
(11,4.182)
(12,5.236)
(13,6.216)
(14,7.3)

	};\addlegendentry{4 cores}%

	\addplot[opacity=0.8,dashed,thick,mark options={black,fill=gray},mark size=5pt]
	coordinates {
		(7,-0.5)
		(8,0.5)
		(9,1.5)		
		(10,2.5)
		(11,3.5)
		(12,4.5)
		(13,5.5)
		(14,6.5)	
	};
	
	\addplot[opacity=0.8,dashed,thick,mark options={black,fill=gray},mark size=5pt]
	coordinates {
		(7,1.7)
		(8,2.7)
		(9,3.7)		
		(10,4.7)
		(11,5.7)
		(12,6.7)
		(13,7.7)
		(14,8.7)	
	};

	\nextgroupplot[ytick={0,2,4,6,8,10},
	xtick={7,9,11,13,15},
	xlabel=$\log_2$ Number of nodes,
	enlargelimits=0.1,
	title=MUSAE,		
	legend style = { column sep = 10pt, legend columns = -1, legend to name = grouplegend,}]
	\addplot[mark=triangle*,opacity=0.8,mark options={black,fill=red},mark size=5pt]
	coordinates {
(7,1.721)
(8,2.785)
(9,3.802)
(10,4.849)
(11,5.858)
(12,6.856)
(13,7.871)
(14,8.887)

	};\addlegendentry{1 core}%

	\addplot[mark=diamond*,opacity=0.8,mark options={black,fill=red},mark size=5pt]
	coordinates {
(7,1.186)
(8,2.233)
(9,3.28)
(10,4.267)
(11,5.258)
(12,6.307)
(13,7.314)
(14,8.366)

	};\addlegendentry{2 cores}%
	\addplot[mark=*,opacity=0.8,mark options={black,fill=red},mark size=3pt]
	coordinates {
(7,0.638)
(8,1.697)
(9,2.749)
(10,3.754)
(11,4.796)
(12,5.741)
(13,6.816)
(14,7.836)

	};\addlegendentry{4 cores}%
		
		\addplot[opacity=0.8,dashed,thick,mark options={black,fill=gray},mark size=5pt]
	coordinates {
		(7,0.1)
		(8,1.1)
		(9,2.1)		
		(10,3.1)
		(11,4.1)
		(12,5.1)
		(13,6.1)
		(14,7.1)	
	};
	
	\addplot[opacity=0.8,dashed,thick,mark options={black,fill=gray},mark size=5pt]
	coordinates {
		(7,2.5)
		(8,3.5)
		(9,4.5)		
		(10,5.5)
		(11,6.5)
		(12,7.5)
		(13,8.5)
		(14,9.5)	
	};

	\end{groupplot}
	\node at ($(group c2r1) + (4.5cm,3.7cm)$) {\ref{grouplegend}}; 
	\end{tikzpicture}
    \caption{Training time as a function of average feature count and number of nodes. Dashed lines are linear runtime references.}\label{fig:runtime}

\end{figure}
For both \textit{AE} and \textit{MUSAE}, we see  that (i) runtime is linear in the number of nodes and in the average number of features per node; and (ii) increasing the number of cores does not decrease runtime as the number of unique features per vertex approaches the size of the feature set. Observation (i) agrees with our complexity analysis in Section \ref{sec:complexity}.

\section{Discussion and conclusion}\label{sec:conclusion}
We present attributed node embedding algorithms that learn local feature information on a pooled (\textit{AE}) and multi-scale (\textit{MUSAE}) basis. 
We augment these models to also explicitly learn local network information (\textit{AE-EGO, MUAE-EGO}), blending the benefits of attributed and proximity-preserving algorithms. 
Results on a range of datasets show that distinguishing neighbourhood information at different scales (\textit{MUSAE} models) is typically beneficial for the downstream tasks of node attribute and link prediction; and that the supplementary network information encoded by \textit{EGO} models typically improves performance further, particularly for link prediction.
Combining both, \textit{MUSAE-EGO} outperforms other unsupervised attributed algorithms at predicting attributes and matches performance of proximity-preserving embeddings in link prediction. 
Furthermore, by learning distinct node and feature embeddings, the \textit{AE} and \textit{MUSAE} algorithms enable transfer learning between graphs with common features, as we demonstrate on real datasets.

We derive explicit pointwise mutual information matrices that each of our algorithms implicitly factorise to enable future interpretability and potential analysis of the information encoded  (e.g. as possible for \textit{Word2Vec} \cite{allen2019what}) and its use in downstream tasks. We see also that two widely used proximity-preserving algorithms \cite{deepwalk,perozzidontwalk} are special cases of our models.
All of our algorithms are efficient and scale linearly with the numbers of nodes and features per node.

\appendix

\section{Proofs}\label{app:proofs}

\lemmaplims*
\begin{proof}
    The proof is analogous to that given for Theorem 2.1 in \cite{qiu2018network}. We show that the computed statistics correspond to sequences of random variables with finite expectation, bounded variance and covariances that tend to zero as the separation between variables within the sequence tends to infinity. The Weak Law of Large Numbers (S.N.Bernstein) then guarantees that the sample mean converges to the expectation of the random variable. We first consider the special case $n \!=\! 1$, i.e. we have a single sequence $v_1, ..., v_l$ generated by a random walk (see Algorithm \ref{AE_algo}). For a particular node-feature pair ($v,f$), we let $Y_i$,  $i\in\{1, ...,l-t\}$, be the indicator function for the event $v_i=v$ and $f \!\in\! \sF_{i+r}$. Thus, we have:
    \begin{equation}
        \tfrac{\# (v,f)_\mathsmaller{{\underset{r}{\rightarrow}}}}{|\sD\mathsmaller{_{\underset{r}{\rightarrow}}}|} \ =\ \tfrac{1}{l-t}\sum_{i=1}^{l-t}Y_i        ,
    \end{equation}
    the sample average of the $Y_i$s. We also have:
    \begin{align*}
        \mathbb{E}[Y_i] 
         =\ & \tfrac{deg(v)}{c}(\mP^r\mF)_{v,f} =  \tfrac{1}{c}(\mD\mP^r\mF)_{v,f}
    \end{align*}    
    \begin{align*}    
        \mathbb{E}[Y_iY_j] 
        =\ & \textup{Prob}[v_i \!=\! v, f \!\in\! \sF_{i+r}, v_j \!=\! v, f \!\in\! \sF_{j+r}]
        \\
         = \underbrace{\tfrac{deg(v)}{c}}_{p(v_i=v\!)}&
        \underbrace{\underbrace{\mP_{:v}^r}_{p(v_{\!i\!+\!r}|v_i=v\!)} \underbrace{diag(\mF_{:f})}_{p(f\in \sF_{\!i\!+\!r}|v_{i\!+\!r}\!)}
        \underbrace{\mP^{j-(i+r)}_{:v}}_{p(v_{\!j}=v|v_{i\!+\!r}\!)}}_{p(v_{\!j}=v,f\in \sF_{i\!+\!r}|v_i=v)}
        \underbrace{\mP^r_{v:}
        \mF_{:f}}_{p(f\in \sF_{\!j\!+\!r}|v_{j}=v\!)}
    \end{align*}
for $j > i+r$.
This allows us to compute the covariance:
\begin{align}
    \text{Cov}(Y_i, Y_j) & = \mathbb{E}[Y_iY_j] - \mathbb{E}[Y_i]\,\mathbb{E}[Y_j]
    \nonumber\\
     = \tfrac{deg(v)}{c}&\mP^r_{v:}diag(\mF_{:f})
    \underbrace{\big(\mP^{j-(i+r)}_{:v} 
        - \tfrac{deg(v)}{c}\1 \big)}_{\text{tends to 0 as }j-i\rightarrow\infty}
        \mP^r_{v:}\mF_{:f} ,
\end{align}
where $\1$ is a vector of ones. The difference term (indicated) tends to zero as $j-i\to\infty$ since then $p(v_{j}=v|v_{i+r})$ tends to the stationary distribution $p(v)=\tfrac{deg(v)}{c}$, regardless of $v_{i+r}$. 
Thus, applying the Weak Law of Large Numbers, the sample average converges in probability to the expected value, i.e.:
\begin{align*}
    \tfrac{\# (v,f)_\mathsmaller{{\underset{r}{\rightarrow}}}}{|\sD\mathsmaller{_{\underset{r}{\rightarrow}}}|}
    = \tfrac{1}{l-t}\sum_{i=1}^{l-t}Y_i \overset{p}{\rightarrow} \tfrac{1}{l-t}\sum_{i=1}^{l-t}\mathbb{E}[Y_i] 
    = \tfrac{1}{c}(\mD\mP^r\mF)_{v,f}
\end{align*}
A similar argument applies to $\tfrac{\# (v,f)_\mathsmaller{{\underset{r}{\leftarrow}}}}{|\sD\mathsmaller{_{\underset{r}{\leftarrow}}}|}$, with expectation term
    $\tfrac{1}{c}(\mF^{\top}\mD\mP^r)_{f,v}$.
%
In both cases, the argument readily extends to the general setting where $n>1$ with suitably defined indicator functions for each of the $n$ random walks (see \cite{qiu2018network}).
\end{proof}
\lemmaplim*

\begin{proof}
\begin{align*}
    \tfrac{\# (v,f)_r}{|\sD_r|}
    &= 
    \tfrac{\# (v,f)_\mathsmaller{{\underset{r}{\rightarrow}}}}{|\sD_r|}
    +  \tfrac{\# (v,f)_\mathsmaller{{\underset{r}{\leftarrow}}}}{|\sD_r|} 
    \\
    & = \tfrac{1}{2}\big(\tfrac{\# (v,f)_\mathsmaller{{\underset{r}{\rightarrow}}}}{|\sD\mathsmaller{_{\underset{r}{\rightarrow}}}|}
    +  \tfrac{\# (v,f)_\mathsmaller{{\underset{r}{\leftarrow}}}}{|\sD\mathsmaller{_{\underset{r}{\leftarrow}}}|} \big) 
    \nonumber\\
    & \overset{p}{\rightarrow}
    \tfrac{1}{2}\big( \tfrac{1}{c}(\mD\mP^r\mF)_{v,f}
    + \tfrac{1}{c}(\mF^{\top}\mD\mP^r)_{f,v} \big)
    \nonumber\\
    & =
    \tfrac{1}{2c}\big( \mD\mP^r\mF
    + \mP^{r\top}\mD  \mF \big)_{v,f}
    \nonumber\\
    & =
    \tfrac{1}{2c}\big( 
    (\mD\mP^r + (\mA^{\top}\mD^{-1})^r\mD)  
    \mF \big)_{v,f}
    \nonumber\\
    & =
    \tfrac{1}{2c}\big( (\mD\mP^r
    + \mD(\mD^{-1}\mA^{\top})^r)  \mF \big)_{v,f}
    \\
%
    & =
    \tfrac{1}{c}( \mD\mP^r \mF )_{v,f}
    \ .
\end{align*}
The final step follows by symmetry of $\mA$, indicating how the Lemma can be extended to directed graphs.
\end{proof}

\section{Embedding Model Hyperparameters}\label{app:embedding}
Our purpose was a fair evaluation compared to other node embedding procedures. Because of this we used hyperparameter settings that give similar expressive power to the competing methods with respect to target matrix approximation \cite{deepwalk,node2vec,perozzidontwalk} and number of dimensions. We used the model implementations available in the open-source library \textit{Karate Club} \cite{karateclub}.
\begin{itemize}
    \item \textit{DeepWalk} \cite{deepwalk}: We used the hyperparameter settings described in Table \ref{tab:our_params}. While the original \textit{DeepWalk} model uses hierarchical softmax to speed up calculations we used a negative sampling based implementation. This way \textit{DeepWalk} can be seen as a special case of \textit{Node2Vec} \cite{node2vec} when the second-order random walks are equivalent to firs-order walks.
    \item $\textit{LINE}_2$ \cite{tang2015line}: We created 64 dimensional embeddings based on $1^{st}$ and $2^{nd}$ order proximity and concatenated these together. Other hyperparameters were taken from the original work.
    \item \textit{Node2Vec} \cite{node2vec}: Except for the \textit{in-out} and \textit{return} parameters that control the second-order random walk behavior we used the hyperparameter settings described in Table \ref{tab:our_params}. These behavior control parameters were tuned with grid search from the $\{4,2,1,0.5,0.25\}$ set using a train-validation split of $80\%-20\%$ \textit{within the training set} itself.
    \item \textit{Walklets} \cite{perozzidontwalk}: We used the hyperparameters described in Table \ref{tab:our_params} except for window size. We set a window size of 4 with individual embedding sizes of 32. This way the overall number of dimensions of the representation remained the same.
    \item \textit{NetMF} \cite{qiu2018network}: We used the hyperparameters described in the respective paper, created 128 dimensional node embeddings with a window size of 5.
    \item \textit{HOPE} \cite{hope}: We utilized the normalized neighbourhood overlap as a proximity measure and the hyperparameters described in Table \ref{tab:our_params}.
    \item \textit{GraRep} \cite{cao2015grarep}: Just as in case of the multi-scale method \textit{Walklets} \cite{perozzidontwalk} we decided to set a window size of 4 with individual embedding sizes of 32.
    \item The attributed node embedding methods \textit{AANE}, \textit{ASNE}, \textit{BANE}, \textit{TADW}, \textit{TENE} all use the hyperparameters described in the respective papers except for the dimension. We parametrized these methods such way that each embedding used in the downstream tasks is 128 dimensional.
\end{itemize}

\section{Supervised Model Hyperparameters}
\label{app:gcn}

Each model was optimized with the Adam optimizer \cite{kingma_adam_2014} with the standard moving average parameters and the model implementations are sparsity aware modifications based on PyTorch Geometric \cite{fey_lenssen}. We needed these modifications in order to accommodate the large number of vertex features -- see the unique features column in Table \ref{tab:descriptive_statistics}. Except for the \textit{GAT} model \cite{gat_iclr18} we used ReLU intermediate activation functions \cite{nair2010rectified} with a softmax unit in the final layer for classification. The hyperparameters used for the training and regularization of the neural models are listed in Table \ref{tab:gcn_param}.

\begin{table}[ht!]
\centering
\caption{Hyperparameter settings used for training the graph neural network baselines.}\label{tab:gcn_param}
{\footnotesize
\vspace{2mm}
\begin{tabular}{lc}
\hline
\textbf{Parameter}&\textbf{Value}\\
\hline
Epochs           & 200 \\
Learning rate  & 0.01   \\
Dropout&0.5\\
$l_2$ Weight regularization&0.001\\
Depth& 2\\
Filters per layer&32\\
\hline
\end{tabular}
}
\end{table}

Except for the \textit{APPNP} model each baseline uses information up to 2-hop neighbourhoods. The model specific settings (when we needed to deviate from the basic settings) are listed in Table \ref{tab:gcn_param} were the followings:
\begin{itemize}
    \item \textit{Classical GCN} \cite{kipf2017semi}: We used the standard parameter settings described in this section.
    \item \textit{GraphSAGE} \cite{graphsage_nips17}: We utilized a graph convolutional aggregator on the sampled neighbourhoods, samples of 40 nodes per source, and the standard settings.
    \item \textit{GAT} \cite{gat_iclr18}: The negative slope parameter of the leaky ReLU function was 0.2, we applied a single attention head, and used the standard hyperparameter settings.
    \item \textit{MixHop} \cite{mixhop_icml19}: We took advantage of the $0^{th}$, $1^{st}$ and $2^{nd}$ powers of the normalized adjacency matrix with 32 dimensional convolutional filters for creating the first hidden representations. This was fed to a feed-forward layer to classify the nodes.
    \item \textit{ClusterGCN} \cite{clustergcn_kdd19}: Just as \cite{clustergcn_kdd19} did, we used the \textit{METIS} procedure  \cite{karypis1998fast} to decompose the graph. We clustered the graphs into disjoint clusters, and the number of clusters was the same as the number of node classes (e.g. in case of the Facebook page-page network we created 4 clusters). For training we used the earlier described setup.
    \item \textit{APPNP} \cite{klicpera_predict_2019, bojchevski2020pprgo}: The top level feed-forward layer had 32 hidden neurons, the teleport probability was set as 0.2 and we used 20 steps for approximate personalized pagerank calculation.
    \item \textit{SGCONV} \cite{sgc_icml19}: We used the $2^{nd}$ power of the normalized adjacency matrix for training the node classifier. 
\end{itemize}

~\\ 
\bibliographystyle{comnet}
\bibliography{./musae.bib}

\begin{thebibliography}{00}

\bibitem{abu2018watch}
Abu-El-Haija, S., Perozzi, B., Al-Rfou, R. {\&} Alemi, A.~A. (2018)  {Watch
  Your Step: Learning Node Embeddings via Graph Attention}. In {\em Advances in
  Neural Information Processing Systems}.

\bibitem{mixhop_icml19}
Abu-El-Haija, S., Perozzi, B., Kapoor, A., Alipourfard, N., Lerman, K.,
  Harutyunyan, H., Steeg, G.~V. {\&} Galstyan, A. (2019)  {M}ix{H}op:
  Higher-Order Graph Convolutional Architectures via Sparsified Neighborhood
  Mixing. In {\em International Conference on Machine Learning}.

\bibitem{ahmed2018learning}
Ahmed, N.~K., Rossi, R., Lee, J.~B., Kong, X., Willke, T.~L., Zhou, R. {\&}
  Eldardiry, H. (2018)  Learning Role-based Graph Embeddings. {\em arXiv
  preprint arXiv:1802.02896}.

\bibitem{allen2019what}
Allen, C., Bala{\v{z}}evi{\'c}, I. {\&} Hospedales, T. (2019)  {What the Vec?
  Towards Probabilistically Grounded Embeddings}. In {\em Advances in Neural
  Information Processing Systems}.

\bibitem{bojchevski2020pprgo}
Bojchevski, A., Klicpera, J., Perozzi, B., Kapoor, A., Blais, M.,
  R\'{o}zemberczki, B., Lukasik, M. {\&} G\"{u}nnemann, S. (2020)  Scaling
  Graph Neural Networks with Approximate PageRank. In {\em Proceedings of the
  26th ACM SIGKDD International Conference on Knowledge Discovery and Data
  Mining}, KDD '20, page 2464–2473.

\bibitem{cao2015grarep}
Cao, S., Lu, W. {\&} Xu, Q. (2015)  {Grarep: Learning Graph Representations
  with Global Structural Information}. In {\em International Conference on
  Information and Knowledge Management}.

\bibitem{clustergcn_kdd19}
Chiang, W.-L., Liu, X., Si, S., Li, Y., Bengio, S. {\&} Hsieh, C.-J. (2019)
  Cluster-GCN: An Efficient Algorithm for Training Deep and Large Graph
  Convolutional Networks. In {\em International Conference on Knowledge
  Discovery and Data Mining}.

\bibitem{fey_lenssen}
Fey, M. {\&} Lenssen, J.~E. (2019)  Fast Graph Representation Learning with
  {PyTorch Geometric}. In {\em ICLR Workshop on Representation Learning on
  Graphs and Manifolds}.

\bibitem{node2vec}
Grover, A. {\&} Leskovec, J. (2016)  Node2{V}ec: Scalable Feature Learning for
  Networks. In {\em International Conference on Knowledge Discovery and Data
  Mining}.

\bibitem{graphsage_nips17}
Hamilton, W., Ying, Z. {\&} Leskovec, J. (2017)  Inductive Representation
  Learning on Large Graphs. In {\em Advances in Neural Information Processing
  Systems}.

\bibitem{rolx}
Henderson, K., Gallagher, B., Eliassi-Rad, T., Tong, H., Basu, S., Akoglu, L.,
  Koutra, D., Faloutsos, C. {\&} Li, L. (2012)  {RolX: Structural Role
  Extraction and Mining in Large Graphs}. In {\em Proceedings of the 18th ACM
  SIGKDD International Conference on Knowledge Discovery and Data Mining}, page
  1231–1239.

\bibitem{refex}
Henderson, K., Gallagher, B., Li, L., Akoglu, L., Eliassi-Rad, T., Tong, H.
  {\&} Faloutsos, C. (2011)  {It's Who You Know: Graph Mining Using Recursive
  Structural Features}. In {\em Proceedings of the 17th ACM SIGKDD
  international conference on Knowledge discovery and data mining}, pages
  663--671.

\bibitem{huang2017accelerated}
Huang, X., Li, J. {\&} Hu, X. (2017)  Accelerated Attributed Network Embedding.
  In {\em SIAM International Conference on Data Mining}.

\bibitem{karypis1998fast}
Karypis, G. {\&} Kumar, V. (1998)  A fast and high quality multilevel scheme
  for partitioning irregular graphs. {\em SIAM Journal on scientific
  Computing}, \textbf{20}(1), 359--392.

\bibitem{kingma_adam_2014}
Kingma, D. {\&} Ba, J. (2015)  Adam: A Method for Stochastic Optimization. In
  {\em International Conference on Learning Representations}.

\bibitem{kipf2017semi}
Kipf, T.~N. {\&} Welling, M. (2017)  Semi-Supervised Classification with Graph
  Convolutional Networks. In {\em International Conference on Learning
  Representations}.

\bibitem{klicpera_predict_2019}
Klicpera, J., Bojchevski, A. {\&} G{\"u}nnemann, S. (2019)  Predict then
  Propagate: Graph Neural Networks meet Personalized PageRank. In {\em
  International Conference on Learning Representations}.

\bibitem{levy2014neural}
Levy, O. {\&} Goldberg, Y. (2014)  Neural Word Embedding as Implicit Matrix
  Factorization. In {\em Advances in Neural Information Processing Systems}.

\bibitem{liao2018attributed}
Liao, L., He, X., Zhang, H. {\&} Chua, T.-S. (2018)  Attributed Social Network
  Embedding. {\em IEEE Transactions on Knowledge and Data Engineering},
  \textbf{30}(12), 2257--2270.

\bibitem{lu2003link}
Lu, Q. {\&} Getoor, L. (2003)  Link-based classification. In {\em International
  Conference on Machine Learning}.

\bibitem{mikolov_1}
Mikolov, T., Chen, K., Corrado, G. {\&} Dean, J. (2013a)  Efficient Estimation
  of Word Representations in Vector Space. In {\em International Conference on
  Learning Representations, Workshop Track Proceedings}.

\bibitem{mikolov_2}
Mikolov, T., Sutskever, I., Chen, K., Corrado, G.~S. {\&} Dean, J. (2013b)
  Distributed Representations of Words and Phrases and Their Compositionality.
  In {\em Advances in neural information processing systems}.

\bibitem{nair2010rectified}
Nair, V. {\&} Hinton, G.~E. (2010)  Rectified Linear Units Improve Restricted
  Boltzmann Machines. In {\em International Conference on Machine Learning}.

\bibitem{namata2012query}
Namata, G., London, B., Getoor, L., Huang, B. {\&} EDU, U. (2012)  Query-driven
  active surveying for collective classification. In {\em International
  Workshop on Mining and Learning with Graphs}.

\bibitem{hope}
Ou, M., Cui, P., Pei, J., Zhang, Z. {\&} Zhu, W. (2016)  {Asymmetric
  Transitivity Preserving Graph Embedding}. In {\em Proceedings of the 22nd ACM
  SIGKDD international conference on Knowledge discovery and data mining},
  pages 1105--1114.

\bibitem{pedregosa2011scikit}
Pedregosa, F., Varoquaux, G., Gramfort, A., Michel, V., Thirion, B., Grisel,
  O., Blondel, M., Prettenhofer, P., Weiss, R., Dubourg, V.  et~al. (2011)
  Scikit-learn: Machine learning in Python. {\em Journal of machine learning
  research}, \textbf{12}(Oct), 2825--2830.

\bibitem{deepwalk}
Perozzi, B., Al-Rfou, R. {\&} Skiena, S. (2014)  Deepwalk: Online Learning of
  Social Representations.. In {\em International Conference on Knowledge
  Discovery andData Mining.}

\bibitem{perozzidontwalk}
Perozzi, B., Kulkarni, V., Chen, H. {\&} Skiena, S. (2017)  {Don't Walk,
  Skip!}: Online Learning of Multi-scale Network Embeddings. In {\em
  International Conference on Advances in Social Networks Analysis and Mining}.

\bibitem{qiu2018network}
Qiu, J., Dong, Y., Ma, H., Li, J., Wang, K. {\&} Tang, J. (2018)  Network
  Embedding as Matrix Factorization: Unifying {D}eepwalk, {LINE}, {PTE}, and
  {N}ode2{V}ec. In {\em International Conference on Web Search and Data
  Mining}.

\bibitem{struc2vec}
Ribeiro, L.~F., Saverese, P.~H. {\&} Figueiredo, D.~R. (2017)  Struc2Vec:
  Learning Node Representations from Structural Identity. In {\em International
  Conference on Knowledge Discovery and Data Mining}.

\bibitem{sign}
Rossi, E., Frasca, F., Chamberlain, B., Eynard, D., Bronstein, M. {\&} Monti,
  F. (2020)  {SIGN: Scalable Inception Graph Neural Networks}. {\em arXiv
  preprint arXiv:2004.11198}.

\bibitem{gemsec}
Rozemberczki, B., Davies, R., Sarkar, R. {\&} Sutton, C. (2019)  {GEMSEC: Graph
  Embedding with Self Clustering}. In {\em Proceedings of the 2019 IEEE/ACM
  International Conference on Advances in Social Networks Analysis and Mining
  2019}, pages 65--72. ACM.

\bibitem{karateclub}
Rozemberczki, B., Kiss, O. {\&} Sarkar, R. (2020)  {Karate Club: An API
  Oriented Open-source Python Framework for Unsupervised Learning on Graphs}.
  In {\em Proceedings of the 29th ACM International Conference on Information
  and Knowledge Management (CIKM '20)}, page 3125–3132. ACM.

\bibitem{rozemberczki2018fastsequence}
Rozemberczki, B. {\&} Sarkar, R. (2018)  {Fast Sequence Based Embedding with
  Diffusion Graphs}. In {\em International Conference on Complex Networks},
  pages 99--107.

\bibitem{feather}
Rozemberczki, B. {\&} Sarkar, R. (2020)  {Characteristic Functions on Graphs:
  Birds of a Feather, from Statistical Descriptors to Parametric Models}. In
  {\em Proceedings of the 29th ACM International Conference on Information and
  Knowledge Management (CIKM '20)}, page 1325–1334. ACM.

\bibitem{tang2015line}
Tang, J., Qu, M., Wang, M., Zhang, M., Yan, J. {\&} Mei, Q. (2015)  {Line:
  Large-Scale Information Network Embedding}. In {\em International Conference
  on World Wide Web}.

\bibitem{gat_iclr18}
Veli{\v{c}}kovi{\'{c}}, P., Cucurull, G., Casanova, A., Romero, A., Li{\`{o}},
  P. {\&} Bengio, Y. (2018)  {Graph Attention Networks}. In {\em International
  Conference on Learning Representations}.

\bibitem{sgc_icml19}
Wu, F., Souza, A., Zhang, T., Fifty, C., Yu, T. {\&} Weinberger, K. (2019)
  {Simplifying Graph Convolutional Networks}. In {\em International Conference
  on Machine Learning}.

\bibitem{yang2015network}
Yang, C., Liu, Z., Zhao, D., Sun, M. {\&} Chang, E.~Y. (2015)  {Network
  Representation Learning with Rich Text Information}. In {\em International
  Joint Conference on Artificial Intelligence}.

\bibitem{yang2018binarized}
Yang, H., Pan, S., Zhang, P., Chen, L., Lian, D. {\&} Zhang, C. (2018)
  {Binarized Attributed Network Embedding}. In {\em IEEE International
  Conference on Data Mining}.

\bibitem{yang2018enhanced}
Yang, S. {\&} Yang, B. (2018)  {Enhanced Network Embedding with Text
  Information}. In {\em International Conference on Pattern Recognition}.

\bibitem{zhang2018sine}
Zhang, D., Yin, J., Zhu, X. {\&} Zhang, C. (2018)  {SINE: Scalable Incomplete
  Network Embedding}. In {\em IEEE International Conference on Data Mining}.

\end{thebibliography}

\end{document}